\documentclass{article}

\usepackage{microtype}
\usepackage{graphicx}
\usepackage{subcaption}
\usepackage{booktabs} 
\usepackage{svg}
\usepackage{multirow}

\usepackage{hyperref}

\usepackage[preprint]{icml2026}

\usepackage{amsmath}
\usepackage{amssymb}
\usepackage{mathtools}
\usepackage{amsthm}
\usepackage[most]{tcolorbox}

\usepackage[capitalize,noabbrev]{cleveref}

\theoremstyle{plain}
\newtheorem{theorem}{Theorem}[section]

\newtheorem{lemma}[theorem]{Lemma}
\newtheorem{corollary}[theorem]{Corollary}
\theoremstyle{definition}
\newtheorem{definition}[theorem]{Definition}
\newtheorem{assumption}[theorem]{Assumption}
\theoremstyle{remark}

\usepackage[textsize=tiny]{todonotes}

\icmltitlerunning{A Geometric View for Understanding Concept Learning and Neuron Interpretation in Sparse Autoencoders}

\begin{document}

\twocolumn[
  \icmltitle{A Geometric View for Understanding Concept Learning and Neuron Interpretation in Sparse Autoencoders}



  \icmlsetsymbol{equal}{*}

  \begin{icmlauthorlist}
    \icmlauthor{Chenhao Zhang}{equal,yyy}
    \icmlauthor{Chris Lin}{equal,yyy}
    \icmlauthor{Su-In Lee}{yyy}
  \end{icmlauthorlist}

  \icmlaffiliation{yyy}{Paul G. Allen School of Computer Science \& Engineering, University of Washington, Seattle, USA}

  \icmlcorrespondingauthor{Su-In Lee}{suinlee@cs.washington.edu}

  \icmlkeywords{Machine Learning, interpretability, ICML}

  \vskip 0.3in
]



\printAffiliationsAndNotice{}  

\begin{abstract}
   We propose a unified mathematical framework for a geometric understanding of concept learning and neuron interpretation in sparse autoencoders (SAEs). While SAEs improve interpretability of neural networks by learning sparse feature representations, a principled definition of ''concept'' and ''learning'' remains unclear. We formalize concepts as sets of data points and cast concept learning as a set-alignment problem between human-defined and model-induced concepts. This formulation distinguishes three increasingly strong notions of learning---detection, separation, and approximation---and yields geometric conditions, error bounds, and capacity constraints for when concepts can be represented by individual neurons or multi-neuron units. It also provides a set-theoretic account for common SAE phenomena, including feature splitting, feature absorption, feature families, and hierarchical concepts. Finally, we connect concept learning and neuron interpretation through formal concept analysis, showing that the two directions need not agree and that their many-to-many structure can be organized by concept lattices. Experiments on synthetic data with ReLU and Top-$K$ SAEs illustrate the theory and reveal the effects of SAE size and sparsity on concept learning.
\end{abstract}


\section{Introduction}

\begin{figure*}[t]
\centering
\includegraphics[width=\textwidth]{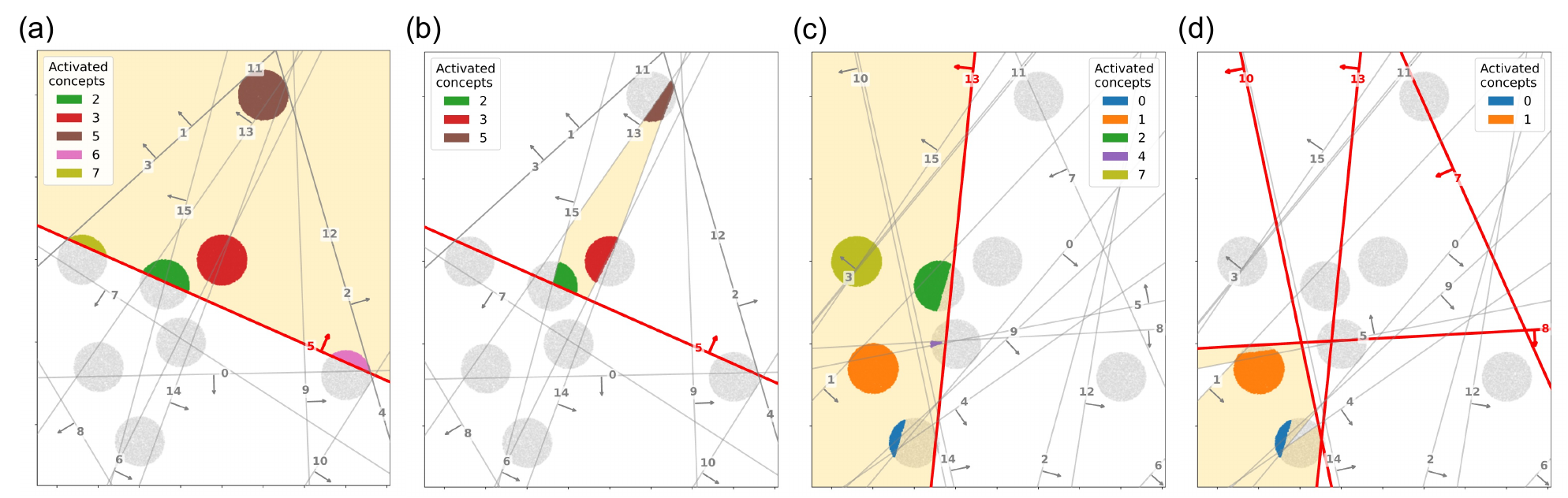}
\vspace{-17pt}
\caption{Examples of single neuron total activation (SNTA) and total neuron single activation (TNSA) of ReLU SAE (expansion factor=8 and L1 regularization=0.5) and Topk-K SAE (expansion factor=8 and K=4). (a) SNTA of ReLU SAE, (b) TNSA of ReLU SAE;  note that the SNTA of ReLU SAE is simply a half space and TNSA is a hyperplane arrangement region~\cite{stanleyhyoperplane}. (c) SNTA of Top-K SAE, (d) TNSA of Top-K SAE; note that the SNTA of Top-K SAE is a subset of a half space and its TNSA is a subset of a hyperplane arrangement region. The shaded area in (d) is the intersection of positively pre-activated (i.e., $z>0$) hyperplanes and negatively pre-activated (i.e., $z<0$) hyperplanes.} 
\vspace{-17pt}
\label{fig:act_def}
\end{figure*}

Large models based on neural networks achieve remarkable performance across many tasks, yet their internal mechanisms remain largely opaque \cite{mechinterp1,mechinterp2}. This lack of interpretability limits scientific understanding \cite{science,interplm}, safety auditing \cite{auditing,safety}, and reliable deployment. Mechanistic interpretability \cite{mechinterp1,mechinterp2} aims to understand the internal computations of models by analyzing how information is represented and used within model activations.

A major challenge is that neurons in neural networks are mostly polysemantic \cite{monosemantic}. A single neuron may encode multiple unrelated concepts, while a single concept may be distributed across many neurons. This phenomenon, known as polysemanticity or superposition \cite{monosemantic,superposition,originalsae2}, makes neuron-level interpretation difficult. 
Sparse autoencoders (SAEs) \cite{andrewsae} address this by
learning overcomplete sparse representations of activations, producing neurons that are often more interpretable and monosemantic \cite{originalsae}.

SAEs are commonly motivated by 
the linear representation hypothesis \cite{linearrepresentationhypothesis}, which posits that semantically meaningful concepts correspond to directions in the activation space and are approximately linearly combined. However, vector directions alone do not define human-interpretable concepts in practice.
Instead, interpretation needs to be contextualized with respect to input data. Specifically, data examples that highly activate a neuron are identified, and their shared patterns are summarized to describe the neuron \cite{autointerp}. 
Thus, SAE interpretation also fundamentally relies on relationships between internal neurons and sets of data examples.

This perspective becomes particularly important in light of empirical SAE phenomena. SAE neurons can resolve polysemanticity \cite{originalsae,originalsae2}; larger SAEs may split coarse neurons into finer semantic components \cite{originalsae2,everyphenomenon}; feature absorption may occur when one neuron captures examples expected to belong to another \cite{featureabsorption}; and groups of neurons may co-activate as a feature family \cite{everyphenomenon}. These observations suggest that SAE neurons are not isolated vectors with fixed meanings, but part of a structured relationship defined through input data, activations, and human-interpretable abstractions. Despite extensive empirical study, we still lack a unified framework that explains when these phenomena arise, how they relate to activation geometry \cite{matchingpursuitsae,geometricconcept}.

At the root of this difficulty lies a more fundamental ambiguity: neither ``concept'' nor ``learning'' is formally defined in most discussions of concept learning and neuron interpretation \cite{philosophysae}. This ambiguity echoes a long-standing philosophical debate. A Platonic or realist view treats concepts as abstract entities independent of their instances, whereas a nominalist or data-grounded view treats concepts as abstractions constructed from collections of examples. Machine learning is closer to the latter. That is, models are trained on empirical data, and interpretations are validated through examples associated with internal neurons. From this data-grounded perspective, a neuron should not be assumed to be a primitive object such as a single vector, but rather interpreted as a set of data points associated with the neuron.

The data-grounded view also clarifies what it means for a model to learn a concept. Human concepts such as ``animal'' or ``food'' correspond to coherent sets of examples that humans can group and describe. A neuron in a model learns a human concept only when the examples associated with the neuron align with a human-understandable set. Conversely, when a neuron activates on heterogeneous examples lacking a coherent abstraction, we regard it as polysemantic or uninterpretable. 
In this work, we propose a novel framework for understanding concept learning and neuron interpretation.
\begin{tcolorbox}
We formulate both concept learning and neuron interpretation as a set-alignment problem between human concepts and model-induced concepts, where concept corresponds to a set and learning corresponds to alignment. And it is the implicit bias and underlying assumptions in both data and model that makes concept learning and neuron interpretation possible.
\end{tcolorbox}
Under this view, neuron interpretation corresponds to how to characterize the set selected by a neuron or SAE neuron, while concept learning corresponds to whether the learned set aligns with a target human concept.

Therefore, in our proposed mathematical framework, we represent concepts as sets and study their alignment through geometric and set-theoretic structures. This framework distinguishes different modes of concept learning, derives conditions under which they arise, and explains empirical SAE phenomena such as feature splitting, feature absorption, and feature families. We further connect concept learning and neuron interpretation through formal concept lattices, providing a formulation for representing hierarchical concept structure.

To summarize, our contributions are as follows:

\textbf{1.} We propose a unified geometric and set-theoretic framework for concept learning, human concept alignment, and SAE interpretation, formulating concept learning as set alignmetn between human-understandable and model-induced concepts.

\textbf{2.} We distinguish three modes of concept learning---concept detection, concept separation, and concept formation---and derive sufficient and necessary conditions, along with scaling laws governing when each mode is achievable. 

\textbf{3.} We show that concept learning and neuron interpretation are related but distinct, and connect them through formal concept lattices that characterize hierarchical structure and neuron semantics.
\section{Related Work}

\textbf{Mechanistic Interpretability and Sparse Autoencoders.}
Mechanistic interpretability studies the internal computations of large models \cite{mechinterp1,mechinterp2,mechinterp3}. A key challenge is superposition, where multiple concepts are represented in overlapping neural directions \cite{superposition}. Sparse autoencoders (SAEs), closely related to dictionary learning \cite{dictionarylearning}, learn overcomplete sparse features that reconstruct model activations \cite{andrewsae}. Recent work shows that SAEs can disentangle superposed representations and recover more monosemantic, human-interpretable features \cite{originalsae,originalsae2,monosemantic}. More recently, \citet{conceptmanifold} find that concepts lie on a low-dimensional shape and that SAEs can globally and locally capture the concept manifold. Our work is a generalization of \citet{linearrepresentationhypothesis} and \citet{conceptmanifold}, because we study the general case of concepts in a set-theoretic framework where concepts can be arbitrary measurable sets.

\textbf{SAE Architectures and Phenomena.}
A growing literature studies empirical phenomena in SAE features, including polysemanticity and monosemanticity \cite{superposition}, feature splitting and feature families \cite{originalsae,everyphenomenon}, and feature absorption \cite{featureabsorption}. Several SAE variants aim to improve sparsity, feature quality, or structure, including Gated SAEs \cite{gatedsae}, JumpReLU SAEs \cite{jumprelusae}, Top-$K$ SAEs \cite{topksae}, matching-pursuit SAEs \cite{matchingpursuitsae}, ensemble SAEs \cite{ensemblingsae}, SPaDE \cite{projectingassumption}, and hierarchical SAEs \cite{metasae,hierarchicalsae}. In particular, hierarchical and matching-pursuit SAEs seek to capture hierarchical or conditional relationships among concepts. However, recent work also raises concerns: SAEs may find seemingly interpretable features in randomly initialized transformers \cite{randomsae}, and very large SAEs can learn pathological concepts \cite{featuremanifold}.

\textbf{Concept Learning and Neuron Interpretation.}
The linear representation hypothesis posits that concepts are represented as directions in activation space and combined approximately linearly \cite{linearrepresentationhypothesis}. Since such directions are not directly interpretable, neuron interpretation methods often explain a neuron or feature using its most activating examples. For example, \citet{autointerp} use an LLM to infer concepts from highly activating and random samples, with related work studying black-box neuron interpretation \cite{blackboxneuroninterp}. Complementary concept-based methods instead start from data or predefined concepts and search for corresponding neurons or directions \cite{fromdatatoneuron,cbm}. Evaluation is crucial for such associations, with \citet{sanitycheck} summarizing metrics and proposing criteria for testing interpretation faithfulness. Recent work further suggests that concepts may be represented by richer geometric structures rather than single linear directions \cite{geometricconcept,localgeometry,llmactivationgeometry,matchingpursuitsae,projectingassumption}.

\textbf{Network Capacity and Hyperplane Arrangements.}
Neural network expressivity is often studied through the regions induced by activation patterns, with the number of regions serving as a measure of capacity \cite{regionnumber,regionnumber2}. This view connects naturally to hyperplane arrangements, where neurons define hyperplanes that partition representation space \cite{stanleyhyoperplane}. Closely related to sparse selection mechanisms, \citet{tropicalgeometrymoe} analyze the capacity of Top-$K$ mixture-of-expert networks by casting expert selection as a hyperplane arrangement problem. This geometric perspective is relevant for understanding the capacity of Top-$K$ SAEs and related sparse architectures.

\section{Preliminaries} \label{sec:preliminary}
We first review sparse autoencoders (SAEs), focusing on ReLU SAE~\cite{originalsae} and Top-K SAE~\cite{topksae}. Let $x\in\mathbb{R}^n$ be an activation vector from a large model. An SAE maps $x$ to a higher-dimensional sparse activation vector $a\in\mathbb{R}^d$, where $d\gg n$, and then reconstructs $x$ from $a$:
\begin{align*}
    z &= Enc(x) = W_{enc}(x-b_{pre})+b_{enc},\\
    a & = Act(z),\\
    \hat{x} &= Dec(a) = W_{dec}a+b_{dec}.
\end{align*}
Here $Act$ is the SAE activation function, $a_i$ is the activation of the $i$-th SAE neuron, and $\hat{x}$ is the reconstruction of $x$. For ReLU SAE, $Act$ is the ReLU function, and the objective is usually written as $\mathcal{L}_{ReLU}=\|x-\hat{x}\|_2^2+\lambda\|a\|_1$.
For Top-K SAE, $Act$ keeps only the $k$ largest coordinates of the pre-activation vector and sets the rest to zero, imposing a hard sparsity constraint, and the objective is $\mathcal{L}_{topk}=\|x-\hat{x}\|_2^2$.

Throughout the paper, with a slight abuse of notation, we denote $W, b$ for $W_{enc}, b_{enc}$, respectively, and $a_i(x)=Act(\langle w_i,x\rangle+b_i)$
for the activation of SAE neuron $i$.

\section{Main Framework}
\subsection{Notations and Definitions}
\textbf{Concepts.}
Let $X\subseteq\mathbb{R}^n$ denote the SAE input space, i.e., the activation space of the original model. Although concepts and interpretations are usually described in the raw data space, \cite{llmbijective} shows that the LLM imposes an invertible map between the raw data space and its internal activation space. Thus, here we consider working in the internal space as equivalent to working in the raw data space. We equip $X$ with the Borel algebra $\mathcal{B}(X)$. A human-understandable concept is a measurable set $C\in\mathcal{B}(X)$, and the human concept set is $\mathcal{C}\subseteq\mathcal{B}(X)$.
We use $\mu$ for the data-supported measure used in concept detection and concept separation, and $\nu$ for a measure on the ambient space used in concept approximation. Details are discussed in Section~\ref{sec:conceptlearning}. Intuitively, $\mu$ evaluates only observed data, whereas $\nu$ also evaluates blank or novel regions outside the observed data support.

\textbf{SAE Neurons.}
For a neuron $i$, define the threshold set and its two sides by
\begin{align*}
    H_i^+ &= \{x\in X: z_i(x)>\tau_i\},\\
    H_i^- &= \{x\in X: z_i(x)\le\tau_i\},
\end{align*}
where $\tau_i$ is a threshold. For ReLU SAE, we usually take $\tau_i=0$. The positive side $H_i^+$ is called the \textit{activation region} of neuron $i$ under ReLU gating.

\textbf{SAE Activations.}
An \textit{activation pattern} $s$ assigns a sign $\sigma_{s;i}\in\{+,-\}$ to every neuron. Its corresponding \textit{total neuron single activation} (TNSA) region is
\begin{align}
    R_s=\bigcap_{i\in[d]}H_i^{\sigma_{s;i}}.
    \label{eq:tnsa}
\end{align}
The collection of all activation patterns, or equivalently the corresponding TNSA regions, is denoted by $\mathcal{A}$. The sparsity of a pattern is $|\{i:\sigma_{s;i}=+\}|$.

The \textit{single-neuron total activation} (SNTA), or simply the activation region of neuron $i$, is
\begin{align}
    N_i=\bigcup_{s\in\mathcal{A}:\,\sigma_{s;i}=+}R_s.
\end{align}
For ReLU SAE, $N_i=H_i^+$. For Top-K SAE, however, $N_i$ is generally only a subset of $H_i^+$, because a neuron with a positive score may still fail to enter the top-$k$ set. An example is shown in Figure~\ref{fig:act_def}. This distinction is important and discussed in Section~\ref{section:sae_arch}.

For a set of neurons $M\subseteq[d]$, define the \textit{multi-neuron activation} as
\begin{align}
    \theta_M=\bigcap_{j\in M}N_j.
\end{align}
The collection of model-learned concepts is denoted by
\begin{align}
    \Theta=\{\theta_M:M\subseteq[d]\}.
\end{align}
We argue that concept learning and neuron interpretation should use $\theta_M$ rather than only $N_i$ or $R_s$: $N_i$ can be too large and may cover many unrelated regions, leading to polysemanticity, while $R_s$ can be too small, making model-learned concepts fragile. By aggregating selected neurons, $\theta_M$ provides a useful granularity for concept learning and neuron interpretation. Examples are shown in Figure~\ref{fig:act_def}. Details are discussed in Section~\ref{sec:conceptlearning}.

Different concepts need not use the same number of neurons. If concept $C$ is represented by $\theta_M$, then the number of neurons used for $C$ is $|M|$.

\subsection{Sparse Autoencoder Architectures}\label{section:sae_arch}
We categorize SAE architectures into two classes: \textit{absolute gating} and \textit{relative gating}. In absolute gating, each neuron's activation is determined independently of other neurons. ReLU SAE, JumpReLU SAE, and Gated SAE are examples of absolute gating. In relative gating, a neuron's activation depends on other neurons. Top-K SAE, Matching Pursuit SAE, and SPaDE are examples of relative gating.

The geometric difference between the two mechanisms appears in the SNTA regions. In absolute gating, each neuron's activation region is a halfspace, $N_i=H_i^+$. Thus, for ReLU SAE, $\theta_M=\bigcap_{j\in M}H_j^+$. In relative gating, a neuron can be positive but inactive because it is not selected by the competitive gating rule. For Top-K SAE,
\begin{align*}
N_i=\{x\in X:z_i(x)>\tau_i \text{ and } i\in \mathrm{TopK}_k(z(x))\}.
\end{align*}
Hence $N_i\subseteq H_i^+$, and $N_i$ is generally a union of patches inside $H_i^+$. An example is shown in Figure~\ref{fig:act_def}. Prior work describes Top-K gating regions as unions of polyhedra~\cite{tropicalgeometrymoe,projectingassumption}. Therefore, most of our geometric results are first stated for absolute gating. For Top-K SAE, the same framework applies, but the additional relative-gating effect must be considered.

\subsection{Concept Learning} \label{sec:conceptlearning}
For simplicity, we write $\theta$ for $\theta_M$. Unless otherwise specified, we use \textit{concepts} to refer to human concepts and \textit{units} to refer to model-learned concepts. When a distinction is needed, we use \textit{unit} for a multi-neuron activation pattern and \textit{neuron} for the total activation region of a single neuron.

\textbf{The goal of concept learning is to align human concepts $\mathcal{C}$ with model-learned concepts $\Theta$.} We consider three levels of learning: concept detection, concept separation, and concept approximation.

\subsubsection{Concept Detection}
Concept detection is the weakest form of concept learning. Its goal is to cover a selected concept. Formally, concept detection holds if $\forall C\in\mathcal{C}, \exists \theta\in\Theta \text{ such that } \mu(C\setminus\theta)=0$. In SAE, this condition is often easy to satisfy because it only requires at least one unit to cover the concept.
However, concept detection alone allows many-to-many mappings: one concept may be covered by multiple units, and one unit may cover multiple concepts. This motivates the stronger notions below.

\subsubsection{Concept Separation}
Concept separation asks whether a selected concept can be separated from other concepts on the observed data support.
\begin{definition}[Concept separation]
A concept $C$ is said to be separated by $\theta_M$ if (i) $x\in H_i^+$ for all $x\in C$ and $i\in M$, and (ii) $x'\in H_j^-$ for all $x'\in X\setminus C$ and $j\in [d]\setminus M$. 
\end{definition}
Concept separation removes some ambiguity in concept detection because the selected unit must cover $C$ exclusively. Empirically, when a single neuron is used as the concept learner, both ReLU and Top-K SAE can fail to separate complicated concepts~\cite{projectingassumption}.

The main limitation of concept separation is generalization. Since $\mu$ evaluates only observed data, $\theta$ may separate $C$ on the training support while still including large blank regions outside that support. Thus, concept separation is useful for classification-like tasks, but it is not sufficient for novel concept discovery, where the model discovers concepts unknown to users. Novel concept discovery is especially useful in scientific domains~\cite{discovery1,discovery2}. The importance of novel concept discovery is also highlighted in agentic interpretability~\cite{agentinterp}, where models can help users understand new concepts and correct human concept annotations~\cite{interplm}. 

\subsubsection{Concept Approximation}
Concept approximation is the strongest form of concept learning. It evaluates whether a unit tightly approximates a concept in the ambient space.
\begin{definition}[Concept approximation]
A concept $C$ is said to be approximated by $\theta_M$ if (i) $x\in H_i^+$ for all $x\in C$ and $i\in M$, and (ii) $x'\in H_j^-$ for all $x'\in \mathbb{R}^d\setminus C$ and $j\in [d]\setminus M$. 
\end{definition}
The key difference between concept separation and concept approximation is the choice of space: the former is evaluated on $X$, whereas the latter is evaluated in $\mathbb{R}^d$.
Intuitively, concept approximation requires the unit to surround the concept as tightly as possible. Unlike concept separation, concept approximation can support concept discovery because it penalizes false positives and false negatives beyond the observed data. An analogue of concept approximation is anomaly detection~\cite{anomalydetection}.

\subsection{Neuron Interpretation}\label{mainframework:neuron_interp}
\begin{figure}[t]
\centering
\includegraphics[width=\columnwidth]{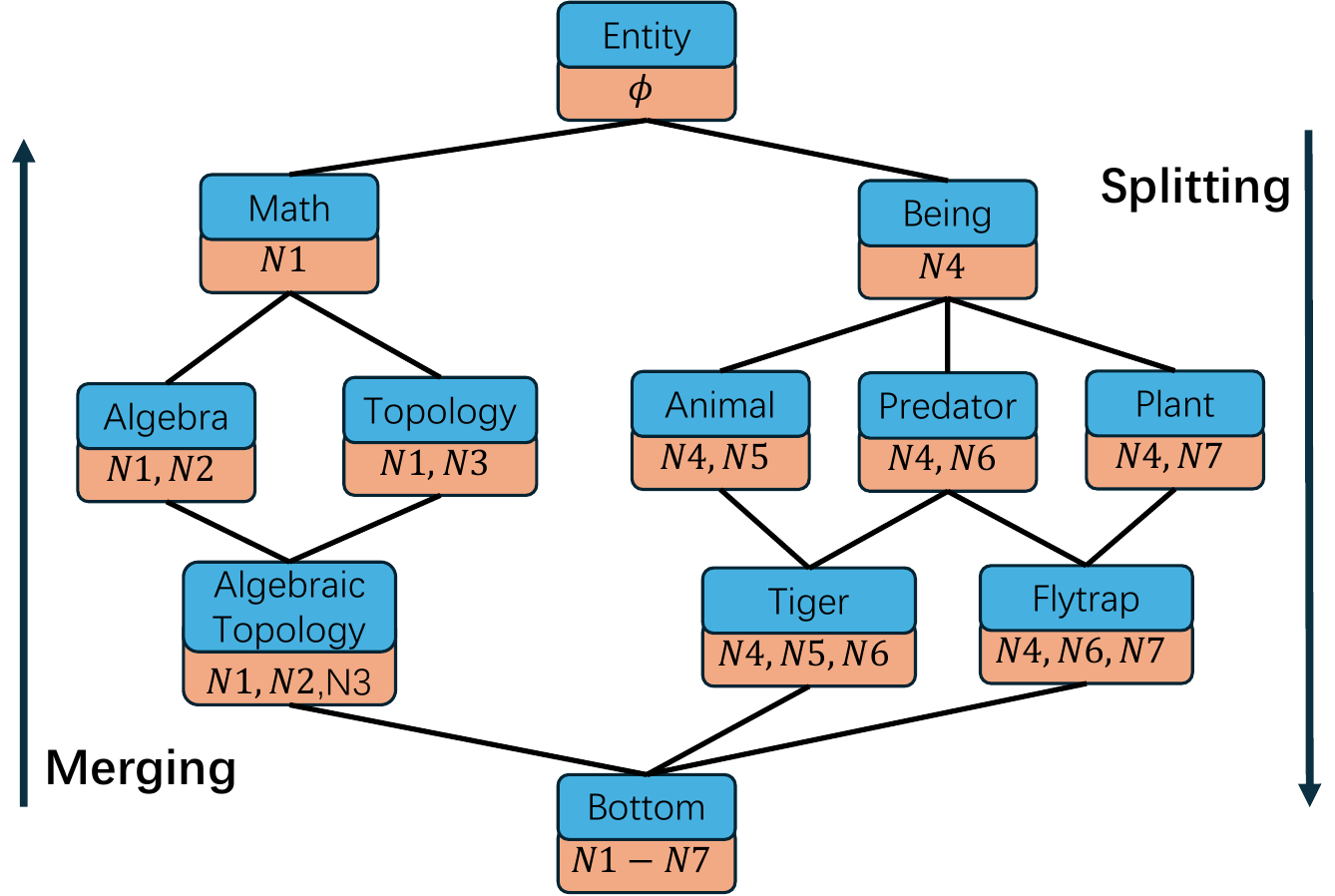}
\vspace{-15pt}
\caption{Toy example of a concept lattice. From top to bottom, concepts become more specific, and the associated neuron intents become more refined. From bottom to top, concepts become more general, and neuron intents are merged into coarser descriptions.}

\vspace{-25pt}
\label{fig:lattice}
\end{figure}
Concept learning and neuron interpretation are two directions of the same relation between concepts and units. Denote $\mathcal{N}=[d]$ as the set of neurons. Define a relation $\mathcal{R}\subseteq \mathcal{C}\times\mathcal{N}$. 
The relation $\mathcal{R}$ can be interpreted as a formal context in formal concept analysis~\cite{conceptlattice}. This lattice captures many-to-many correspondences between human concepts and model-learned concepts, providing a more structured view than selecting a single best match in either direction. An example of concept lattice can be found in Fig.~\ref{fig:lattice}.

\subsection{Sparse Autoencoder Phenomena}\label{sec:sae_phenomena}
We now formulate and explain several common SAE phenomena in our framework.

\textbf{Polysemanticity and monosemanticity.}
Polysemanticity means that a neuron is related to multiple concepts. In terms of $\mathcal{R}$, this means that a neuron is associated with multiple unrelated concepts. Monosemanticity means that this reverse relation is concentrated on one concept, and thus requires $f$ to be injective: different concepts should not be assigned to the same neuron. Therefore, the number of available neurons must be at least the number of concepts. This is made precise in Section~\ref{sec:theory_capacity}.

\textbf{Feature splitting.}
Feature splitting means that a broad neuron in a smaller SAE is split into several more specific neurons in a larger SAE. Formally, if $\theta$ is a broad neuron and $\theta_1,\ldots,\theta_r$ are more specific neurons, then feature splitting can be expressed as $ \theta \approx \bigcup_{j=1}^r\theta_j$ with $\theta_j\cap\theta_l\approx\varnothing$ for $j\neq l$.
The approximate disjointness is consistent with sparsity: if the split neurons overlapped heavily, then data in the overlap would activate many neurons at once and violate the sparsity constraint.

\textbf{Feature absorption.}
Feature absorption is a failure mode of hierarchical learning. Suppose $C_i\subset C_j$, where $C_i$ is the child concept and $C_j$ is the parent concept. Ideally, data in $C_i$ should also activate the model concept for $C_j$. Absorption occurs when $\mu(C_i\cap \theta_{C_j}^{c})>0$. That is, the parent feature fails to activate on part of its child concept. This can happen because activating both parent and child features increases the sparsity cost.

\textbf{Feature family.}
A feature family consists of several neurons that tend to activate together. Formally, a family $\theta_1,\ldots,\theta_r$ has nontrivial co-activation if $\bigcap_{l=1}^r\theta_l\neq\varnothing$. These neurons may represent different aspects of the same concept or nearby concepts in a semantic family.

\textbf{Hierarchical concepts.}
Hierarchical concepts correspond to set inclusion. If $C_i\subset C_j$, then $C_i$ is more specific than $C_j$. For example, ``mammal'' is contained in ``animal''. Ideally, the learned model concepts should satisfy $\theta_{C_i}\subset \theta_{C_j}$. However, maintaining this hierarchy can increase the sparsity cost because data in the child concept may need to activate both child and parent neurons.


\section{Theoretical Results}\label{section:theory}
In this section, we present the main theoretical results for concept learning and neuron interpretation, including necessary and sufficient conditions, failure conditions, and error bounds. We focus primarily on ReLU SAE and defer the corresponding results for Top-K SAE to Appendix~\ref{app:topk}.

We begin with the following assumption.
\begin{assumption}\label{assumption}
$\mathcal{C}$ has finite cardinality (i.e, there are finitely many concepts). Every concept $C\in\mathcal{C}$ is compact, and $X=\bigcup_{C\in\mathcal{C}} C$.
\end{assumption}
The finite cardinality assumption is intuitive and natural because a human cannot perceive infinitely many concepts. In $\mathbb{R}^n$, compact sets are closed and bounded by the Heine--Borel theorem~\cite{topology}. This assumption is reasonable and weak: closedness means that if a data point is infinitely close to a concept, then it belongs to that concept; boundedness rules out infinite-valued data or concepts; and $X=\bigcup_{C\in\mathcal{C}} C$ ensures that every data point belongs to at least one concept. In particular, finite concepts are automatically compact. Overall, assuming concepts to be measurable and compact rules out pathological cases while still allowing flexible shapes, such as disconnected sets, Swiss rolls, and helices.

\subsection{Concept Separation}\label{sec:theory_concept_separation}

We start with a simple warm-up case: using one neuron to separate a concept $C$. Let $N$ denote $X\setminus C$, equivalently $\bigcup_{C'\in\mathcal{C},\, C'\neq C}(C'\setminus C)$. Recall that separating $C$ requires a neuron, or more generally a unit, to place $C$ on its positive side and $N$ on its negative side. The following theorem gives the necessary and sufficient condition for separation by one neuron.
\begin{theorem}[Concept separation with one neuron]\label{theorem:neuron_separation_condition}
    $C$ can be separated from $N$ with one neuron \textit{if and only if} $Conv(C)\cap \overline{Conv(N)}=\varnothing$, where $Conv$ denotes the convex hull.
\end{theorem}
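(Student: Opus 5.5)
The argument is a strict separating-hyperplane argument, with one neuron identified with an affine functional $z(x)=\langle w,x\rangle+b$ and threshold $\tau$. Separating $C$ by one neuron means $C\subseteq\{z>\tau\}$ and $N\subseteq\{z\le\tau\}$. We absorb $\tau$ into $b$, so the task is to find $w,b$ with $z>0$ on $C$ and $z\le 0$ on $N$.

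\emph{Necessity.} Suppose such $w,b$ exist. The set $\{z>0\}$ is convex and contains $C$, so it contains $Conv(C)$. The set $\{z\le 0\}$ is a closed halfspace containing $N$, so it contains $Conv(N)$ and therefore $\overline{Conv(N)}$. These two halfspaces are disjoint, so $Conv(C)\cap\overline{Conv(N)}=\varnothing$.

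\emph{Sufficiency.} By Assumption~\ref{assumption}, $C$ is compact. In $\mathbb{R}^n$ the convex hull of a compact set is compact (Carath\'eodory: it is the continuous image of the compact set $C^{n+1}\times\Delta_n$), so $K:=Conv(C)$ is compact and convex. Also $F:=\overline{Conv(N)}$ is closed and convex, and $K\cap F=\varnothing$ by hypothesis.

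If $N=\varnothing$, any $w$ with a small enough $b$ works. Otherwise $F$ is nonempty, and we use the standard projection construction:
\begin{itemize}
\item The function $(x,y)\mapsto\|x-y\|$ attains a positive minimum over $K\times F$. We minimize $x\mapsto d(x,F)$ over compact $K$; this distance is continuous and positive because $F$ is closed and disjoint from $K$.
\item Let $x_0\in K$ and $y_0\in F$ be minimizers, with $y_0$ the projection of $x_0$ onto $F$. Set $w=x_0-y_0\neq 0$ and $b=-\langle w,(x_0+y_0)/2\rangle$.
\item The variational inequality for projection onto a convex set gives $\langle w,y-y_0\rangle\le 0$ for all $y\in F$, and symmetrically $\langle w,x-x_0\rangle\ge 0$ for all $x\in K$.
\item Hence $z\ge\|w\|^2/2>0$ on $K\supseteq C$ and $z\le-\|w\|^2/2<0$ on $F\supseteq N$. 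This is the required neuron. By shifting $b$, any prescribed threshold $\tau$ can be matched.
\end{itemize}

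\emph{Main obstacle.} The delicate point is sufficiency when $N$ is not compact or not closed. This is why the condition uses $\overline{Conv(N)}$: a merely disjoint $Conv(N)$ could touch $Conv(C)$ in the limit, and then no strict hyperplane exists. The proof therefore needs the compactness of $Conv(C)$ (from Assumption~\ref{assumption}) together with the closedness of $F$ to get a positive gap, and the paper's definition demands strict positivity on $C$. I would also note that under Assumption~\ref{assumption}, $N$ is contained in the compact set $X$, so $\overline{Conv(N)}$ is itself compact and the minimizers exist on both sides.
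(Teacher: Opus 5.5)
Your proof is correct and follows the paper's argument essentially step for step: necessity because the open halfspace $\{z>0\}$ contains $Conv(C)$ while the closed halfspace $\{z\le 0\}$ contains $\overline{Conv(N)}$, and sufficiency via a nearest-point pair $(x_0,y_0)$ with $w=x_0-y_0$ and the midpoint bias. The only slip is in the case $N=\varnothing$: there you need a sufficiently \emph{large} bias $b$, so that $z>0$ on the bounded set $C$, not a small one.
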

The proof can be found in Appendix~\ref{app:neuron_separation_condition}. If we use one neuron for each concept to separate all concepts, the following two corollaries follow immediately.
\begin{corollary}\label{corollary:neuron_separation}
    \begin{enumerate}
        \item All concepts in $\mathcal{C}$ can be separated from each other \textit{if and only if} $Conv(C_i)\cap \overline{Conv(N_i)}=\varnothing$ for all $C_i\in\mathcal{C}$.
        \item When concept separation is possible for $\mathcal{C}$, the minimum number of selected neurons is  $|\mathcal{C}|$.
    \end{enumerate}
\end{corollary}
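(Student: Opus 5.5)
Both parts reduce to Theorem~\ref{theorem:neuron_separation_condition}, applied concept by concept. Throughout, we read ``one neuron per concept'' as the setting of the corollary: each $C_i\in\mathcal{C}$ is assigned a neuron $j_i$ whose activation region $H_{j_i}^+$ contains $C_i$ and excludes $N_i=X\setminus C_i$. Under absolute gating (ReLU SAE), the hyperplanes $(w_j,b_j)$ of different neurons are independent parameters. So the statement that all concepts can be separated simultaneously splits into $|\mathcal{C}|$ independent single-neuron separation problems. Assumption~\ref{assumption} (finitely many concepts) ensures there are finitely many such problems, and hence finitely many neurons.

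\textbf{Part 1.} For necessity, suppose every $C_i$ is separated by some neuron $j_i$. Restricting attention to that single neuron gives a one-neuron separation of $C_i$ from $N_i$. The ``only if'' direction of Theorem~\ref{theorem:neuron_separation_condition} then yields $Conv(C_i)\cap\overline{Conv(N_i)}=\varnothing$, and this holds for each $i$.

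For sufficiency, assume the disjointness holds for all $i$. The ``if'' direction of Theorem~\ref{theorem:neuron_separation_condition} gives, for each $i$, a pair $(w^{(i)},b^{(i)})$ with $C_i\subseteq\{z>0\}$ and $N_i\subseteq\{z\le 0\}$, after absorbing the threshold into the bias. We assign these pairs to distinct encoder rows $j_1,\dots,j_{|\mathcal{C}|}$. This requires $d\ge|\mathcal{C}|$, which is automatic in the overcomplete regime and in any case is the count made precise in Part~2.

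The one point needing care is the cross-neuron clause (ii) of the separation definition. It demands that points of $X\setminus C_i$ lie in $H_j^-$ for every neuron $j\notin M$. We handle it by reading the corollary, as in the preceding sentence of the paper, as ``each concept has its own separating neuron'', so that only the designated neuron's sign pattern on $C_i$ versus $N_i$ matters. Unused neurons, if any, are set to be nowhere active, for example with $w=0$ and $b<0$. This is the step I would state most carefully, since clause (ii) is otherwise stronger than the per-concept separation the corollary intends.

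\textbf{Part 2.} For the upper bound, the construction in Part~1 uses exactly $|\mathcal{C}|$ neurons. For the lower bound, we argue by injectivity. Suppose two distinct concepts $C_i\neq C_k$ were separated by the same neuron $j$. Then $H_j^+\cap X=C_i$ and $H_j^+\cap X=C_k$, since the neuron is positive exactly on its concept within $X$. This forces $C_i=C_k$, a contradiction. So the assignment $C_i\mapsto j_i$ is injective, and at least $|\mathcal{C}|$ neurons are needed. If two concepts coincide as sets, they are the same element of $\mathcal{C}$, so the count is unaffected. I expect no real obstacle here; the only subtlety is the interpretation of clause (ii) discussed in Part~1.
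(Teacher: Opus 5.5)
Your proof is correct and follows the paper's own argument: Part 1 applies Theorem~\ref{theorem:neuron_separation_condition} to each $C_i$ with $N_i=X\setminus C_i$, and Part 2 uses the same injectivity argument ($H^+\cap X=C_i=C_k$ forces $C_i=C_k$), with sufficiency coming from the Part 1 construction. Your caveat about clause (ii) is well taken: the paper's proofs implicitly use the same per-neuron reading (positive on $C$, non-positive on $X\setminus C$), so your interpretation matches what is actually proved.
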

The proof can be found in Appendix~\ref{app:neuron_separation}.
These results show that concept separation with a single neuron is highly demanding. Corollary~\ref{corollary:neuron_separation} also clarifies a difficulty in neuron interpretation: monosemanticity is hard to achieve because concepts in LLM activation spaces can be twisted, and their convex hulls are unlikely to be disjoint. Despite this difficulty, SAEs can mitigate polysemanticity because (1) they introduce more neurons as candidates for concept separation, as shown below, and (2) concepts that can be inherently disentangled or split may satisfy these requirements. Our proofs are constructive; training an SAE is not guaranteed to find such constructions, but having more neurons increases the chance of successful separation.

We next consider separation by multiple neurons, or a unit. In this case, the requirements can be weaker:
\begin{theorem}[Concept separation with multiple neurons]\label{theorem:unit_separation_condition}
    $C$ can be separated from $N$ by a unit if and only if $Conv(C)\cap \overline{N}=\varnothing$.
\end{theorem}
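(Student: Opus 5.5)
We prove the two directions separately, reducing the multi-neuron case to the single-neuron case one point of $N$ at a time. Separation by a unit $\theta_M$ (ReLU gating, so $\theta_M=\bigcap_{j\in M}H_j^+$) means two things. First, $C\subseteq H_i^+$ for every $i\in M$. Second, every point of $N=X\setminus C$ fails to lie in $\theta_M$, i.e.\ each $x'\in N$ lies in $H_j^-$ for some $j\in M$. We read the definition's condition (ii) in this effective sense: $N\cap\theta_M=\varnothing$, with the remaining neurons free to be switched off on $N$. Geometrically, $C$ must sit inside a polyhedron $P=\bigcap_{j\in M}\{z_j>0\}$ that avoids $N$.

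\textbf{Necessity.} Suppose $\theta_M$ separates $C$ from $N$. Since $P$ is an intersection of open halfspaces, it is convex and open. Because $C\subseteq P$ and $P$ is convex, we get $Conv(C)\subseteq P$. Because $N\cap P=\varnothing$ and $P$ is open, the closure also misses $P$, so $\overline N\cap P=\varnothing$. Together these give $Conv(C)\cap\overline N=\varnothing$.

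\textbf{Sufficiency.} Assume $Conv(C)\cap\overline N=\varnothing$.
1. By Assumption~\ref{assumption}, $C$ is compact, so $K=Conv(C)$ is compact; in finite dimension the convex hull of a compact set is compact (Carath\'eodory). Also $\overline N$ is compact, being a finite union of compact sets $C'$ intersected with the closed set $\overline{X\setminus C}$, inside a bounded set.
2. For each $y\in\overline N$ we have $y\notin K$. By strict separation of a point from a compact convex set (equivalently, by Theorem~\ref{theorem:neuron_separation_condition} applied with $N=\{y\}$), there is an affine $z_y$ with $z_y>0$ on $K$ and $z_y(y)<0$.
3. By continuity, $z_y<0$ on an open neighbourhood $U_y$ of $y$. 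The sets $\{U_y\}$ cover the compact set $\overline N$, so finitely many $U_{y_1},\dots,U_{y_m}$ suffice.
4. Take $M$ to be the neurons with pre-activations $z_{y_1},\dots,z_{y_m}$, and make every other neuron inactive on $X$ (set its bias very negative, using boundedness of $X$).
5. Check: every $x\in C\subseteq K$ activates all neurons in $M$, and every $x'\in N$ lies in some $U_{y_l}$ and so is rejected by neuron $l$. Hence $\theta_M\cap X=C$.

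\textbf{Main obstacle.} The delicate step is the compactness argument in step 3, which turns infinitely many point-wise separations into finitely many neurons. It needs $\overline N$ compact, which is exactly why the condition is stated with $\overline N$ rather than $N$: a point of $N$ accumulating on $K$ could not be cut off by finitely many open halfspace constraints. A second point to settle is the literal wording of condition (ii) in the definition. If it is read as requiring all neurons outside $M$ to be off on $N$, we satisfy it by making those neurons inactive on all of $X$, as in step 4. Reconciling (ii) with the intended meaning "$N$ is excluded from $\theta_M$" is handled by the reading stated at the start.
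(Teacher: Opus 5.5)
Your proof is correct and follows the paper's argument essentially step for step. Necessity comes from the unit being an open convex set, so it contains $\operatorname{Conv}(C)$ and misses $\overline N$. Sufficiency comes from strictly separating each $y\in\overline N$ from the compact set $\operatorname{Conv}(C)$ and keeping finitely many of those halfspaces via a finite subcover of the compact set $\overline N$.

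One small caveat: Theorem~\ref{theorem:neuron_separation_condition} as stated only gives $z_y(y)\le 0$. If you invoke it instead of strict point separation, you must first shift the bias slightly before taking the open neighbourhood $U_y$. This is possible because $z_y$ attains a positive minimum on the compact set $\operatorname{Conv}(C)$.
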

The proof can be found in Appendix~\ref{app:unit_separation_condition}. The corresponding corollaries for separating all concepts with units are as follows.
\begin{corollary}\label{corollary:unit_separation}
    All concepts in $\mathcal{C}$ can be separated from each other by units if and only if $Conv(C_i)\cap \overline{(C_j\setminus C_i)}=\varnothing$ for all $C_i\in\mathcal{C}$ and $C_j\neq C_i\in\mathcal{C}$.
\end{corollary}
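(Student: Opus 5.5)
The plan is to reduce the corollary to Theorem~\ref{theorem:unit_separation_condition}, applied once to each concept $C_i\in\mathcal{C}$ with $N_i = X\setminus C_i$. By that theorem, all concepts can be separated by units if and only if $Conv(C_i)\cap\overline{N_i}=\varnothing$ for every $i$. It then remains to rewrite $\overline{N_i}$ in terms of the pairwise differences $C_j\setminus C_i$.

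\textbf{Decomposing $N_i$.} Assumption~\ref{assumption} gives $X=\bigcup_{C\in\mathcal{C}}C$, so
\[
N_i=\bigcup_{j\neq i}(C_j\setminus C_i).
\]
Since $\mathcal{C}$ is finite, the closure of a finite union is the union of the closures:
\[
\overline{N_i}=\bigcup_{j\neq i}\overline{(C_j\setminus C_i)}.
\]
This is the key place where finiteness is used; for infinitely many concepts the closure could be strictly larger.

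\textbf{Concluding.} Combining the two displays,
\[
Conv(C_i)\cap\overline{N_i}=\bigcup_{j\neq i}\Bigl(Conv(C_i)\cap\overline{(C_j\setminus C_i)}\Bigr).
\]
The left side is empty if and only if every term on the right is empty. Quantifying over all $i$ then yields exactly the stated condition in both directions.

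\textbf{Simultaneous separation.} One point needs care: whether separating all concepts "from each other" requires a single SAE that separates every concept simultaneously, not just one unit per concept built separately. I expect this to be the main obstacle. My plan is to note that the construction in the proof of Theorem~\ref{theorem:unit_separation_condition} produces, for each $C_i$, a finite set of neurons $M_i$. Taking the disjoint union of all the $M_i$ as the neuron set gives one SAE in which each $C_i$ is represented by $\theta_{M_i}$.

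The difficulty is condition (ii) of the separation definition, which asks neurons outside $M_i$ to be negative on $X\setminus C_i$. Neurons from other $M_j$ are positive on $C_j$, which may intersect $X\setminus C_i$, so condition (ii) can fail. I will therefore handle it in one of two ways:
\begin{enumerate}
\item interpret the corollary, as the paper does for Corollary~\ref{corollary:neuron_separation}, as requiring each concept to be separable by its own unit, in which case the per-concept reduction above suffices; or
\item argue that the relevant requirement is $\theta_{M_i}\cap X = C_i$, which is preserved when the constructions are pooled, since $\theta_{M_i}$ depends only on the neurons in $M_i$.
\end{enumerate}
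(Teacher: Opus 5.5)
Your proposal is correct and follows the paper's proof exactly: apply Theorem~\ref{theorem:unit_separation_condition} to each $C_i$ with $N_i=X\setminus C_i=\bigcup_{j\neq i}(C_j\setminus C_i)$, use finiteness of $\mathcal{C}$ to write $\overline{N_i}=\bigcup_{j\neq i}\overline{(C_j\setminus C_i)}$, and take the conjunction over all $i$. The paper does not discuss simultaneous separation; it implicitly takes your per-concept reading together with the criterion $C_i\subseteq\theta_{M_i}$, $(X\setminus C_i)\cap\theta_{M_i}=\varnothing$ that its proof of Theorem~\ref{theorem:unit_separation_condition} actually uses, so your remark about the literal condition (ii) is a fair point about the paper's definition rather than a gap in your argument.
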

The proof can be found in Appendix~\ref{app:unit_separation}.
These results show that unit-based concept separation has much weaker requirements than neuron-based separation, leading to cleaner concept learning and more disentangled neuron interpretation. For example, with a single neuron, a concept cannot be separated when other concepts surround it; with a unit, such separation may still be possible. One consequence is that feature splitting is not universal across all concepts, and hierarchical concepts can be difficult to learn without architectural changes. 

Although the above results are primarily built on finite-dimensional spaces, we maintain flexibility to extend to infinite-dimensional spaces, as discussed in Appendix~\ref{app:neuron_separation_condition}.

When perfect separation is not possible, we need to study the resulting error. We define the separation error as follows.
\begin{definition}[Separation error]\label{def:separation_error}
The separation error is the symmetric difference on the data support:
    \begin{align*}
        e_{sep}(C, \theta)=\mu(C\Delta\theta) = \underbrace{\mu(\theta\setminus C)}_{\text{contamination error } e_c} + \underbrace{\mu(C\setminus \theta)}_{\text{missing error } e_m},
    \end{align*}
    where $\mu$ is a Borel probability measure supported on $X$.
\end{definition}
This definition also applies when $\theta$ is a single neuron. Intuitively, $e_m$ measures how much of the target concept is missed by the selected unit, while $e_c$ measures how much unrelated content is covered by the selected unit, leading to polysemanticity. Variants of separation error are discussed in Appendix~\ref{app:separation_error}. The separation error is zero when perfect separation holds up to a $\mu$-null set. In non-separable concept learning, however, it cannot reach zero and has an irreducible component. 

When concepts are not disjoint, separation on the overlapping boundary reduces to concept approximation, which we discuss next.

\subsection{Concept Approximation}\label{sec:theory_concept_approximation}
We define the approximation error as follows.
\begin{definition}[Approximation error]\label{def:approximation_error}
The approximation error is the symmetric difference under the ambient measure:
    \begin{align*}
        e_{app}(C, \theta)=\nu(C\Delta\theta),
    \end{align*}
    where $\nu$ is a Borel probability measure supported on $\mathbb{R}^n$.
\end{definition}
Unlike concept separation, which considers only observed data, concept approximation must also account for novel data. Although a single neuron may separate concepts on the observed support, it is generally insufficient for concept approximation. We therefore focus on multi-neuron activations, or units. The necessary and sufficient condition is as follows.
\begin{theorem}[Concept approximation condition]\label{theorem:approximation_condition}
    A concept $C\in\mathcal{C}$ can be arbitrarily well approximated under the approximation error by a unit if and only if $C$ is convex up to a $\nu$-null set.
\end{theorem}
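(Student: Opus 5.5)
We read a unit in the ReLU setting as a finite intersection of open halfspaces, $\theta_M=\bigcap_{j\in M}H_j^+$. Such a set is always a convex polyhedron. We read ``arbitrarily well approximated'' as: for every $\varepsilon>0$ there is a unit $\theta$ with $e_{app}(C,\theta)=\nu(C\Delta\theta)<\varepsilon$. We read ``convex up to a $\nu$-null set'' as: there is a convex set $K$ with $\nu(C\Delta K)=0$. The proof then splits into two directions.

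\textbf{Sufficiency.} Assume $\nu(C\Delta K)=0$ with $K$ convex. We may take $K\subseteq \overline{Conv(C)}$, so $K$ is bounded because $C$ is compact by Assumption~\ref{assumption}. The plan has three steps.
\begin{enumerate}
\item Replace $K$ by its closure $\overline K$. The set $\overline K\setminus K$ lies in the boundary of a convex set. That boundary is Lebesgue-null; for a general $\nu$ we must instead argue through the outer polytopes below.
\item Use the standard fact that a compact convex set is the intersection of countably many closed halfspaces. Taking finite sub-intersections gives a decreasing sequence of polytopes $P_m\downarrow \overline K$.
\item Perturb each $P_m$ slightly outward to an open polytope $\theta_m=\bigcap_j H_j^+$, realized by choosing $w_j,b_j$ as the outward halfspace normals and offsets. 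We still have $\theta_m\downarrow \overline K$.
\end{enumerate}
Since $\nu$ is a finite measure, continuity from above gives $\nu(\theta_m\setminus \overline K)\to 0$. Because $\overline K\subseteq\theta_m$, the missing part satisfies $\nu(\overline K\setminus\theta_m)=0$. Hence $e_{app}(C,\theta_m)\le \nu(C\Delta \overline K)+\nu(\theta_m\setminus\overline K)$. The only remaining issue is the boundary term $\nu(\overline K\setminus K)$. If $\nu$ charges that boundary, we approximate $K$ from inside instead, using $\nu(K\setminus K_\delta)\to0$ for an increasing exhausting family $K_\delta$. The cleanest route is to say that ``convex up to null set'' already permits choosing the representative $K$ to be closed. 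We will handle this by taking $K$ to be the closed representative whenever the hypothesis allows it.

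\textbf{Necessity.} Suppose there are units $\theta_m$ with $\nu(C\Delta\theta_m)\to0$. Each $\theta_m$ is convex. Then $\mathbf 1_{\theta_m}\to\mathbf 1_C$ in $L^1(\nu)$, so a subsequence converges $\nu$-a.e. Define $K=\liminf_m\theta_m=\bigcup_{k}\bigcap_{m\ge k}\theta_m$. Each $\bigcap_{m\ge k}\theta_m$ is an intersection of convex sets and hence convex. These sets increase in $k$, and an increasing union of convex sets is convex, so $K$ is convex. The a.e. convergence gives $\nu(C\Delta K)=0$, which is exactly the claim.

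\textbf{Main obstacle.} I expect the delicate step to be the boundary/null-set bookkeeping in the sufficiency direction, since $\nu$ is an arbitrary Borel probability measure rather than Lebesgue measure. The approach is to absorb that bookkeeping into the choice of convex representative. Failing that, we restrict to closed representatives or to $\nu$ absolutely continuous with respect to Lebesgue measure, where convex boundaries are null.
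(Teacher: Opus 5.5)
Your proof is correct under the hypothesis the paper actually imposes in this section: $\nu$ has a density with respect to Lebesgue measure on a bounded convex domain, so $\nu(\partial K)=0$ for every convex $K$, and your passage from $K$ to $\overline K$ costs nothing.

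\textbf{Sufficiency.} This is the paper's route made explicit. The paper cites convex-body approximation by polytopes and discards polyhedral boundaries as $\nu$-null. You instead build the outer polytopes from countably many supporting halfspaces, shift them outward to open ones, and apply continuity from above.

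\textbf{Necessity.} Here your route is genuinely different. The paper closes the units inside a bounded domain and extracts a Hausdorff-convergent subsequence by the compactness theorem for convex bodies. It then needs the density bound to turn Hausdorff convergence into $\nu$-convergence. Strictly, it should also intersect with the convex domain $\Omega$ rather than with $\operatorname{Supp}(\nu)$, otherwise the $K_m$ need not be convex. Your $\liminf$ of an a.e.\ convergent subsequence is purely measure-theoretic. It needs neither compactness nor a density, and it works for every finite Borel $\nu$. What the paper's route buys in exchange is a compact representative, but only under its regularity assumption.

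\textbf{The fallback for general $\nu$.} Neither remedy you propose works. In $\mathbb{R}^2$, let $\nu$ be normalized length on $S^1\cup L$ with $L=[-1,1]\times\{0\}$, and let $C$ be the closed upper semicircle. Then $K=\{x^2+y^2\le 1,\ y>0\}$ is convex with $\nu(C\Delta K)=0$, yet $\nu(\overline K\setminus K)=\nu(L)>0$.
\begin{itemize}
\item No closed representative exists. Any closed convex $K'$ with $\nu(C\Delta K')=0$ contains a $\nu$-full, hence dense, subset of $C$. Being closed, it then contains all of $C$, including $(\pm 1,0)$, and by convexity it contains $L$.
\item Inner approximation fails. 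Any open $\theta\subseteq K$ lies in the open half-disk and misses $C$ entirely.
\end{itemize}
The theorem still holds here, since the units $\{y>\varepsilon\}$ work, but they lie neither inside nor outside a fixed representative. The same example also shows that the paper's Hausdorff-limit argument genuinely needs the density: the closed units converge to the closed half-disk, which is not a representative.

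\textbf{A general fix.} There is a short argument by inner regularity.
\begin{itemize}
\item Pick compact $A\subseteq K$ and $B\subseteq\mathbb{R}^n\setminus K$ with $\nu(K\setminus A)+\nu((\mathbb{R}^n\setminus K)\setminus B)<\varepsilon$.
\item Then $\operatorname{Conv}(A)\subseteq K$ is compact, convex and disjoint from $B$.
\item The finite-subcover construction in the proof of Theorem~\ref{theorem:unit_separation_condition} gives a unit $\theta\supseteq\operatorname{Conv}(A)$ with $\theta\cap B=\varnothing$.
\item Hence $\nu(C\Delta\theta)=\nu(K\Delta\theta)<\varepsilon$.
\end{itemize}
Combined with your necessity argument, this proves the theorem for arbitrary finite Borel $\nu$, with no boundary bookkeeping and no use of compactness of $C$.
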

The proof can be found in Appendix~\ref{app:approximation_condition}. Thus, for all concepts to be arbitrarily well approximated by units, each concept must be convex up to a $\nu$-null set.
The error rate is given by the following theorem.
\begin{theorem}[Concept approximation error rate]\label{theorem:approximation_rate}
 Under regularity and boundary-smoothness conditions, a concept $C\in\mathcal{C}$ can be approximated by a unit $\theta_{M}$ with error
    \begin{align*}
        e_{app}(C, \theta_{M})\lesssim e_{irr} + A |M|^{-\frac{2}{r-1}},
    \end{align*}
    where $A$ is a constant related to boundary smoothness, $r$ is the effective dimension of $C$, and $e_{irr}$ is an irreducible error. In particular, $e_{irr}=0$ when $C$ is convex, or more generally when $\nu(Conv(C)\setminus C)=0$.
\end{theorem}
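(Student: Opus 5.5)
The plan is to split the error through the convex hull. For ReLU SAE, $\theta_M=\bigcap_{j\in M}H_j^+$ is a convex polyhedron with at most $|M|$ facets, and it is the natural approximant of $K:=\mathrm{Conv}(C)$; compactness of $C$ (Assumption~\ref{assumption}) makes $K$ a convex body. Since the symmetric difference obeys the triangle inequality,
\begin{align*}
e_{app}(C,\theta_M)=\nu(C\Delta\theta_M)\le \nu(C\Delta K)+\nu(K\Delta\theta_M).
\end{align*}
We have $C\subseteq K$, so the first term equals $\nu(K\setminus C)$, and we define this to be $e_{irr}$. It vanishes exactly when $\nu(\mathrm{Conv}(C)\setminus C)=0$, and in particular when $C$ is convex. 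This agrees with Theorem~\ref{theorem:approximation_condition}: no polyhedron can remove this term. The work is then to show that a convex body can be approximated by an intersection of $m=|M|$ halfspaces with error $O(m^{-2/(r-1)})$.

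For that second term I would use the classical polytope-approximation theory of convex bodies (Gruber; Böröczky; Ludwig). Suppose $K$ has effective dimension $r$, meaning it lies in an $r$-dimensional affine subspace, or $\nu$ restricted near $K$ behaves like an $r$-dimensional measure. Suppose also that $\partial K$ is $C^2$ with positive Gauss curvature; this is the "boundary smoothness" hypothesis. Then the best circumscribed polytope $P_m\supseteq K$ with $m$ facets satisfies
\begin{align*}
\mathrm{vol}_r(P_m\setminus K)\sim \tfrac12\,\mathrm{div}_{r-1}\Big(\int_{\partial K}\kappa^{1/(r+1)}\,d\sigma\Big)^{(r+1)/(r-1)} m^{-2/(r-1)},
\end{align*}
and this is the source of the constant $A$. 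If the ambient dimension exceeds $r$, I would add the $2(n-r)$ halfspaces that cut out the affine hull (or a thin slab around it), which only shifts $m$ by a constant. To realize $P_m$ as $\theta_M$, I would set each row $w_j$ to the outer facet normal and choose $b_j$ so that $\{z_j>0\}$ is the open side containing $K$, with a slight outward shift to keep $C$ strictly positive. This gives $\theta_M\supseteq K\supseteq C$, so the unit is covering as well as approximating. Finally, I would convert volume to $\nu$-measure by assuming $\nu$ has a bounded density $\rho$ with respect to $r$-dimensional Lebesgue measure near $K$ (the "regularity" condition). This gives $\nu(\theta_M\setminus K)\le\|\rho\|_\infty\,\mathrm{vol}_r(P_m\setminus K)\lesssim A|M|^{-2/(r-1)}$. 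Combining the two terms yields the bound.

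I expect the main obstacle to be this second step, specifically justifying the rate under hypotheses loose enough to match the informal statement "regularity and boundary-smoothness". The sharp asymptotic needs $C^2$ smoothness and positive curvature. For a general convex body, a non-asymptotic bound $\mathrm{vol}(P_m\setminus K)\le c(K)\,m^{-2/(r-1)}$ still holds for circumscribed polytopes, by Bronshtein–Ivanov or Dudley-type $\varepsilon$-net constructions on the sphere of normals. I would first try to prove that directly. Place $m$ normals forming a $\delta$-net of $S^{r-1}$ with $\delta\asymp m^{-1/(r-1)}$, and take the supporting halfspaces of $K$ in those directions. For a body with bounded curvature radii, the outward protrusion of the resulting polytope beyond $K$ is $O(\delta^2)$ in Hausdorff distance. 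Integrating this over the surface area gives a volume of $O(\delta^2)=O(m^{-2/(r-1)})$, and the constant depends on the surface area and the curvature bound, which I would call $A$. I would stick with this Hausdorff-distance argument because it is self-contained and avoids quoting the exact Gruber constant.
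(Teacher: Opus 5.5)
Your decomposition is the paper's. You use the triangle inequality for $\nu(\cdot\,\Delta\,\cdot)$ through the convex envelope $K=\mathrm{Conv}(C)$, set $e_{irr}=\nu(K\setminus C)$, and then apply an $m^{-2/(r-1)}$ polytope-approximation rate for $K$.

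The difference is the key lemma.
\begin{itemize}
\item The paper quotes the sharp asymptotic for best approximation by \emph{general} (not necessarily circumscribed) $m$-facet polytopes. Its constant is $\operatorname{ldiv}_{r-1}$, with the density $\rho_\nu$ built into the boundary integral. Since circumscribed polytopes are a subclass, this gives the better constant. But the optimal polytope generally cuts through $C$, and the quoted formula is only asymptotic in $m$.
\item You use circumscribed polytopes (Gruber's $\operatorname{div}_{r-1}$ formula, or your own net argument) and bound $\nu$ by $\|\rho\|_\infty$ times volume. The constant is cruder, but your unit satisfies $C\subseteq\theta_M$. That is exactly condition (i) in the paper's definition of concept approximation. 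The Hausdorff-distance route also gives a self-contained, non-asymptotic bound.
\item Your appeal to Dudley/Bronshtein--Ivanov covers a case the paper sidesteps. For non-convex $C$, $\mathrm{Conv}(C)$ typically has flat or ruled boundary pieces. There the $C^2$/positive-curvature hypotheses the paper needs for its envelope fail, while Dudley's $O(m^{-2/(r-1)})$ bound holds for every convex body.
\end{itemize}

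There are two small corrections and one caution.

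First, with $w_j$ equal to the outer facet normal $u_j$, no choice of $b_j$ makes $\{z_j>0\}$ the facet halfspace containing $K$, because the orientations are opposite. Take $w_j=-u_j$ and $b_j=h_K(u_j)+\varepsilon$.

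Second, your remark that no polyhedron can remove $\nu(\mathrm{Conv}(C)\setminus C)$ is false. Let $C$ be a ball together with one distant point, with $\nu$ having positive density on the hull. Then $C$ is convex up to a $\nu$-null set, so by Theorem~\ref{theorem:approximation_condition} units approximate it arbitrarily well, yet $\nu(\mathrm{Conv}(C)\setminus C)>0$. The genuinely irreducible quantity is $\inf_{K\text{ convex}}\nu(C\Delta K)$, as the paper notes. This does not affect your upper bound.

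The caution concerns your direct argument. A \emph{uniform} $\delta$-net of normals gives $O(\delta^2)$ protrusion only because positive curvature places $K$ inside the radius-$R$ ball tangent at each boundary point. Consider a flat boundary piece of width $L$ whose normal the net misses. The two nearest supporting halfspaces form a tent of height of order $L\delta$ over it, which gives only $m^{-1/(r-1)}$. So for a general envelope you need Dudley's adaptive construction, not a uniform net on the sphere. That construction takes a net on the boundary of the parallel body $K+B$, with $B$ the unit ball, and pulls it back to nearest points of $K$; this automatically picks up the normals of flat pieces.
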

Details and proofs can be found in Appendix~\ref{app:approximation_rate}. Intuitively, the irreducible error is nonzero when $C$ is non-convex to a positive degree, because a unit is essentially a convex polytope and therefore cannot eliminate the penalty from non-convexity. To approximate all concepts arbitrarily well, each concept must be convex, and the number of neurons must satisfy $d\ge \sum_{i=1}^{|\mathcal{C}|}|M_i|$ when $\theta_{M_i}$ is used to approximate $C_i$; this lower bound can be reduced when neurons are reused across concepts. Although concept approximation appears stricter than concept separation and requires more neurons, perfect concept approximation can impose weaker structural requirements in cases where concepts overlap: overlapping convex concepts can be approximated arbitrarily well, whereas concept separation is impossible in this setting. The larger neuron requirement in concept approximation helps exclude unrelated regions and better resolve polysemanticity. 

Note that although the above theories for concept separation and concept approximation are for concept learning, they also apply to neuron interpretation because the selected neuron/unit forms an exclusive relation with the target concept, so the interpretation of the neuron/unit corresponds to the target concept.

\subsection{Concept Learning Capacity}\label{sec:theory_capacity}
To make monosemanticity possible, the selected concept learning function $f$ should be approximately injective. This yields a necessary combinatorial capacity condition.

Without loss of generality, let $d$ be the number of non-dead neurons. Let $k_c$ be the maximum number of neurons allowed to represent a concept $C$. The value of $k_c$ is an interpretation budget; for Top-K SAE, one should have $k_c\le k$, where $k$ is the Top-K sparsity.

\begin{theorem}[Concept learning capacity]\label{theorem:capacity}
Suppose perfect monosemanticity holds for all concepts, and each concept is represented by at most $k_c$ neurons. In the regime $d\gg k_c$, this requires approximately
\begin{align}
    d
    \gtrsim
    (k_c!\,|\mathcal{C}|)^{1/k_c}.
\end{align}
\end{theorem}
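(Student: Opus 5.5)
Perfect monosemanticity means the concept-learning map $f:\mathcal{C}\to 2^{[d]}$, $C\mapsto M_C$ with $\theta_{M_C}$ the unit assigned to $C$, is injective. The argument is then a pigeonhole count followed by an asymptotic inversion. If two distinct concepts $C\neq C'$ had $M_C=M_{C'}$, the same unit $\theta_M$ would be associated with both, contradicting monosemanticity as formulated in Section~\ref{sec:sae_phenomena}. So $f$ is injective. Its image lies in the family of nonempty subsets of $[d]$ of size at most $k_c$, since each concept uses at most $k_c$ neurons. Injectivity then gives
\begin{align*}
    |\mathcal{C}| \le \sum_{j=1}^{k_c}\binom{d}{j}.
\end{align*}

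Next I estimate this sum for $d\gg k_c$. Each ratio of consecutive terms is $\binom{d}{j-1}/\binom{d}{j}=j/(d-j+1)=O(k_c/d)$, so the sum is dominated by its last term:
\begin{align*}
    \sum_{j=1}^{k_c}\binom{d}{j}=\binom{d}{k_c}\bigl(1+O(k_c/d)\bigr).
\end{align*}
Moreover $\binom{d}{k_c}=\frac{d(d-1)\cdots(d-k_c+1)}{k_c!}\le \frac{d^{k_c}}{k_c!}$, and this is asymptotically tight when $k_c^2\ll d$. Combining the two estimates gives $|\mathcal{C}|\lesssim d^{k_c}/k_c!$. Rearranging yields $d^{k_c}\gtrsim k_c!\,|\mathcal{C}|$, and taking $k_c$-th roots gives $d\gtrsim (k_c!\,|\mathcal{C}|)^{1/k_c}$.

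Two side remarks belong with the proof. First, for Top-K SAEs the constraint $k_c\le k$ does not change the count; it only restricts which $k_c$ are admissible. Relative gating can make some subsets $M$ unrealizable, which shrinks the set of available units and so only strengthens the bound. Second, for $k_c=1$ the bound reduces to $d\ge|\mathcal{C}|$, which matches Corollary~\ref{corollary:neuron_separation}(2).

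The step needing the most care is justifying injectivity rigorously: the word "approximately" in the statement has to be pinned down. I would define perfect monosemanticity as requiring distinct concepts to receive distinct units, so that injectivity holds exactly. The only approximation left is then the asymptotic replacement of $\sum_j\binom{d}{j}$ by $d^{k_c}/k_c!$. I would state explicitly that this is exact up to factors $1+O(k_c^2/d)$. The exact necessary condition is $\sum_{j\le k_c}\binom{d}{j}\ge|\mathcal{C}|$, and the displayed bound is its leading-order form.
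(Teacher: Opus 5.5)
Your proof is correct and follows the paper's argument: injectivity of the concept-to-unit assignment gives $|\mathcal{C}|\le\sum_{r=1}^{k_c}\binom{d}{r}$, and replacing the sum by its leading term $\binom{d}{k_c}\approx d^{k_c}/k_c!$ yields $d\gtrsim(k_c!\,|\mathcal{C}|)^{1/k_c}$. Your additions sharpen the paper's bare $(1+o(1))$ step without changing the route: the consecutive-ratio bound on the tail of the sum, the one-sided inequality $\binom{d}{k_c}\le d^{k_c}/k_c!$ (which is all the necessary-condition direction needs), and the observation that tightness requires $k_c^2\ll d$.
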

The proof can be found in Appendix~\ref{app:capacity}. This is a necessary condition and does not by itself guarantee concept separation or approximation. Although it may appear to contradict Corollary~\ref{corollary:neuron_separation}, Corollary~\ref{corollary:unit_separation}, and Theorem~\ref{theorem:approximation_condition}, those earlier results do not account for sparsity, whereas sparsity is explicitly considered here.

\subsection{Concept Learning and Neuron Interpretation}\label{sec:neuron_interp}
Although neuron interpretation has not been specified in the previous sections, the theoretical results above still provide insight into it. In particular, using individual neurons or multi-neuron units for concept learning leads to different levels of interpretation quality, such as different degrees of monosemanticity. We now state the link between concept learning and neuron interpretation in an algebraic way.

Define $U$ as the power set of all data, $U=\mathcal{P}(X)$, so that the human concept family $\mathcal{C}$ is a finite subset of $U$. Recall that $\mathcal{N}=[d]$ is the set of neurons and $M\in\mathcal{P}(\mathcal{N})$ is a set of neurons and $\theta_M\subseteq X$ is the activation region of $M$. For a single neuron $N\in\mathcal{N}$, we write $\theta_N:=\theta_{\{N\}}$. Since each $\theta_M$ is a subset of $X$, we also regard it as an element of $U$.

We define a binary relation $R\subseteq U\times\mathcal{N}$ by
\begin{align*}
    C R N \Longleftrightarrow C\subseteq \theta_N.
\end{align*}
Intuitively, $C R N$ means that neuron $N$ is active on the whole data region $C$. When $C\in\mathcal{C}$, this says that $N$ covers the whole human concept region $C$. This relation is not assumed to be a function: a concept may be represented by multiple neurons, and a neuron may be related to multiple concepts.

The relation $R$ induces two maps
\begin{align*}
    &f:U\rightarrow \mathcal{P}(\mathcal{N}),
    \qquad f(C)=\{N\in\mathcal{N}: C R N\},\\
    &g:\mathcal{P}(\mathcal{N})\rightarrow U,
    \qquad g(M)=\bigcap_{N\in M} \theta_N,
\end{align*}
with the convention that $g(\emptyset)=X$. The map $f$ sends a data region to the set of neurons that are active on the entire region. Thus, $f$ corresponds to the concept-to-neuron direction. The map $g$ sends a set of neurons to their common activation region. Thus, $g$ corresponds to the neuron-to-region direction used in neuron interpretation. 

Due to space limit, we put the complete results in Appendix.~\ref{app:neuron_interp}. The complete results contains construction of concept lattice and algebraic explanations of SAE phenomena.
\begin{figure*}[t]
\centering
\includegraphics[width=\textwidth]{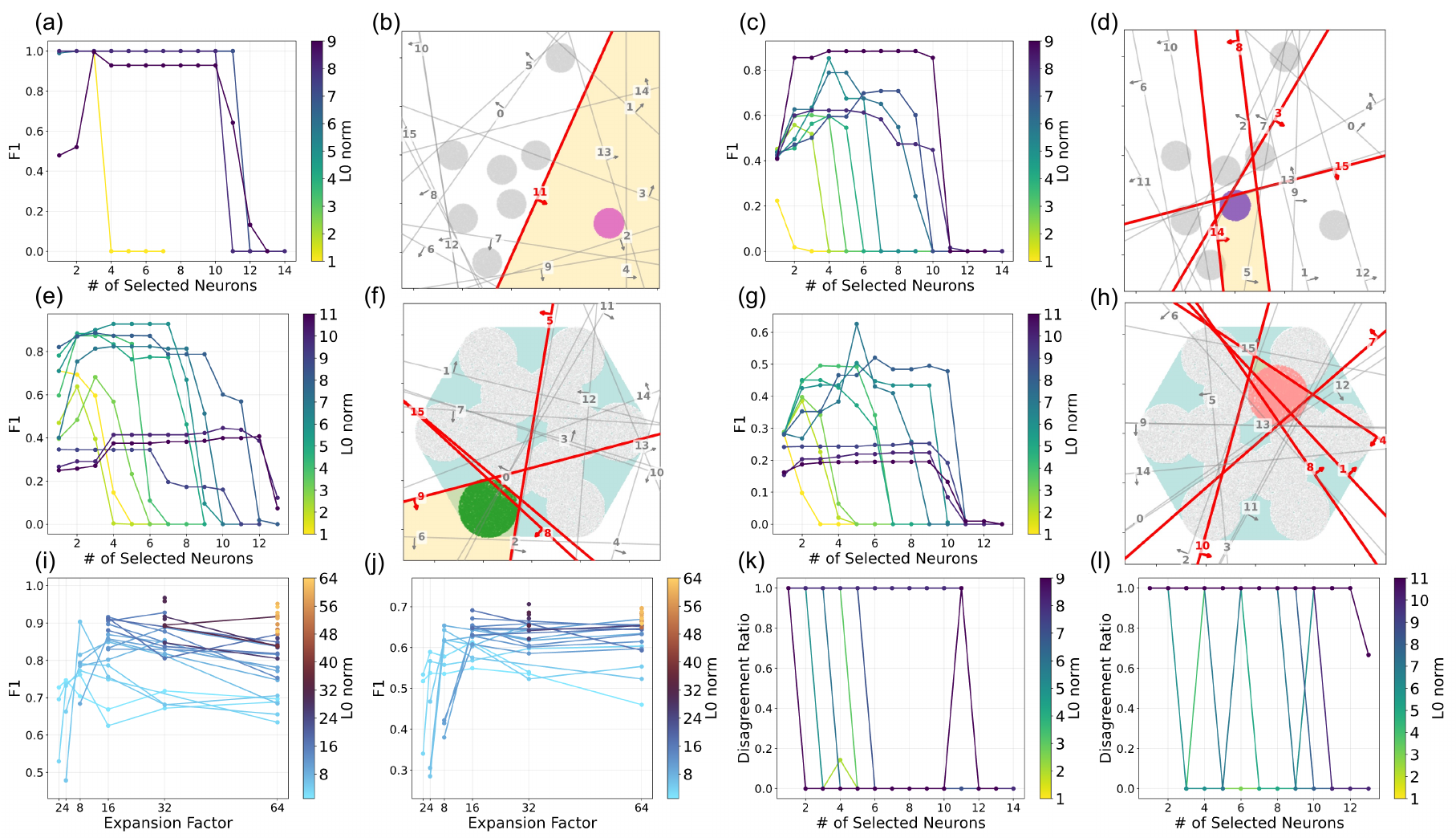}
\vspace{-10pt}
\caption{Concept separation, concept approximation, concept-learning capacity, and concept learning--neuron interpretation disagreement for ReLU SAEs. Panels (a)--(h) and (k)--(l) use expansion factor 8; panels (i)--(j) vary the expansion factor. \textbf{(a)(b)}: F1 score and visualization for a neuron-separable concept (Theorem~\ref{theorem:neuron_separation_condition}). \textbf{(c)(d)}: F1 score and visualization for a non-neuron-separable but unit-separable concept (Theorem~\ref{theorem:unit_separation_condition}). \textbf{(e)(f)}: approximation F1 score and visualization for an easier-to-approximate concept (Theorems~\ref{theorem:approximation_condition} and~\ref{theorem:approximation_rate}); the blue region inside the convex hull of all concepts represents novel/unseen data. \textbf{(g)(h)}: approximation F1 score and visualization for a harder-to-approximate concept. \textbf{(i)(j)}: concept-learning capacity, measured by the best F1 score (Theorem~\ref{theorem:capacity}), on disjoint and overlapping concepts. \textbf{(k)(l)}: concept learning--neuron interpretation disagreement ratio (Section~\ref{mainframework:neuron_interp}) for the selected concepts in (d) and (h).}
\vspace{-15pt}
\label{fig:main_relu}
\end{figure*}

\begin{figure*}[t]
\centering
\includegraphics[width=\textwidth]{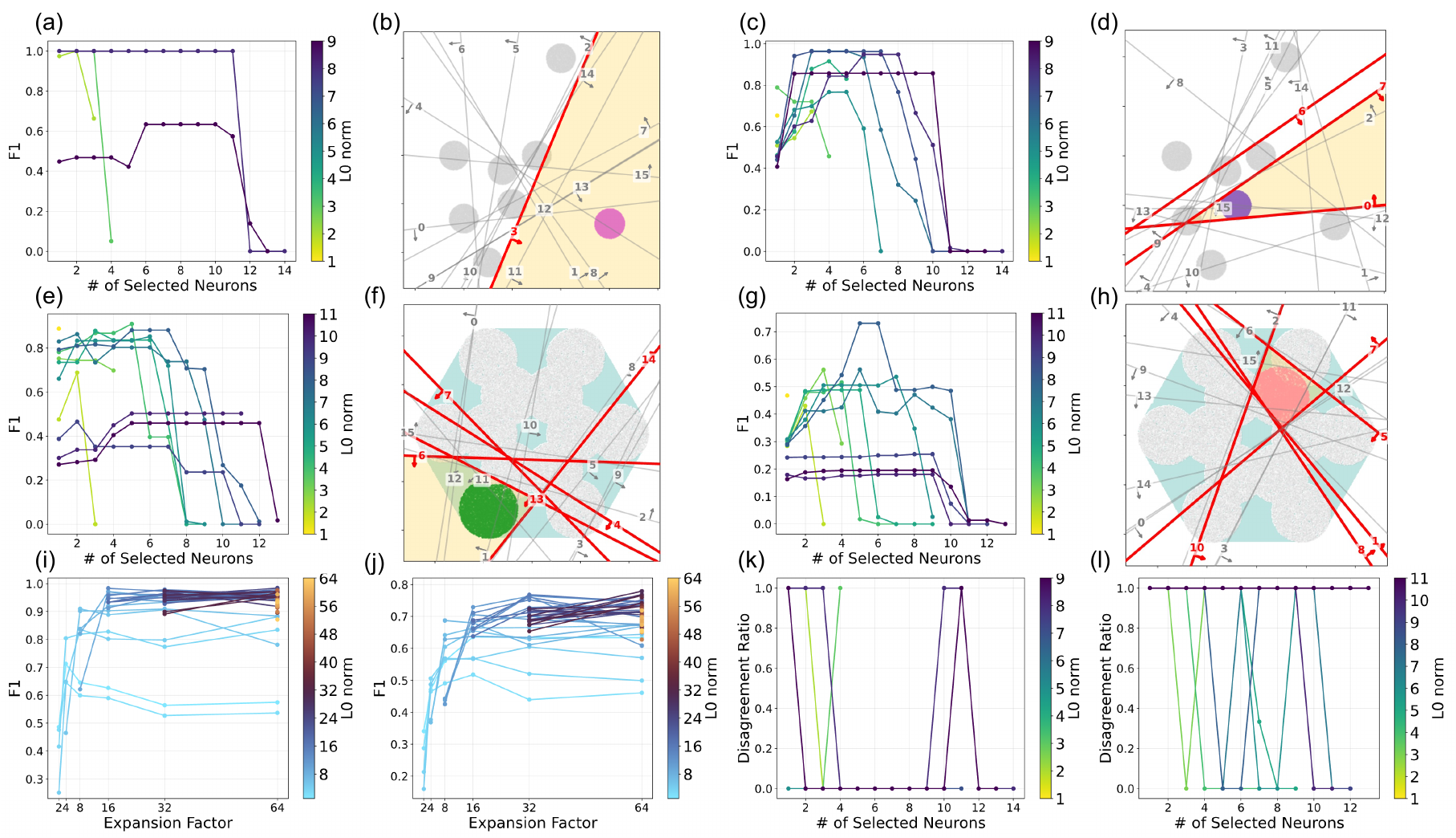}
\vspace{-10pt}
\caption{Concept separation, concept approximation, concept-learning capacity, and concept learning--neuron interpretation disagreement for Top-$K$ SAEs. Panels (a)--(h) and (k)--(l) use expansion factor 8; panels (i)--(j) vary the expansion factor. \textbf{(a)(b)}: F1 score and visualization for a neuron-separable concept (Theorem~\ref{theorem:neuron_separation_condition}). \textbf{(c)(d)}: F1 score and visualization for a non-neuron-separable but unit-separable concept (Theorem~\ref{theorem:unit_separation_condition}). \textbf{(e)(f)}: approximation F1 score and visualization for an easier-to-approximate concept (Theorems~\ref{theorem:approximation_condition} and~\ref{theorem:approximation_rate}); the blue region inside the convex hull of all concepts represents novel/unseen data. \textbf{(g)(h)}: approximation F1 score and visualization for a harder-to-approximate concept. \textbf{(i)(j)}: concept-learning capacity, measured by the best F1 score (Theorem~\ref{theorem:capacity}), on disjoint and overlapping concepts. \textbf{(k)(l)}: concept learning--neuron interpretation disagreement ratio (Section~\ref{mainframework:neuron_interp}) for the selected concepts in (d) and (h).}
\vspace{-15pt}
\label{fig:main_topk}
\end{figure*}

\section{Experiments}
Our theoretical results are constructive. In a trained SAE, however, the neurons are fixed, so concept learning becomes a search problem over the learned model-concept set:
\begin{align}
    \theta_C^*
    =
    \arg\min_{\theta\in\Theta} \mathrm{metric}(C,\theta),
\end{align}
where $\mathrm{metric}$ is a task-dependent loss, such as separation or approximation error. For score-based metrics such as F1, we equivalently maximize the score. Thus, the constructive results characterize when concept learning is possible in principle, while the empirical setting measures how well the trained neurons approximate the constructive results. Our experiments study how concept learning quality changes with the number of selected neurons, SAE expansion factor, and SAE sparsity. In particular, we ask whether larger SAEs provide more candidate features for representing a concept, and whether concept learning and neuron interpretation induce the same feature--concept correspondence. We empirically study Corollaries~\ref{corollary:neuron_separation} and~\ref{corollary:unit_separation}, Theorems~\ref{theorem:approximation_rate} and~\ref{theorem:capacity}, and Section~\ref{mainframework:neuron_interp}.

\subsection{Setup}

\textbf{Data and Model.}
We use two-dimensional synthetic data for ease of visualization and analysis. Each human concept is represented by a cluster in the input space. This controlled setting lets us directly compare the geometry of human concepts with the model concepts induced by SAE features. We consider two data configurations: mutually disjoint concepts and partially overlapping concepts. We train ReLU SAEs and Top-$K$ SAEs while varying expansion factor and sparsity. All models are trained until convergence. Details are provided in Appendix~\ref{app:exp}.

\textbf{Metrics.}
Following \citet{projectingassumption}, we use F1 score as the primary metric. To obtain a more complete view of concept-learning quality, we also report the separation error $e_{\mathrm{sep}}$ and approximation error $e_{\mathrm{approx}}$ in the case study in Section~\ref{sec:analysis_sep_app_error_limitation_topk}.

\textbf{Neuron Selection Algorithms.}
For an SAE with target sparsity $L_0=k$, we evaluate units containing at most $k$ selected neurons. Selecting an SAE feature subset, or equivalently a set of hyperplanes, to optimize a target metric is combinatorial. Exhaustive search quickly becomes infeasible: for example, choosing exactly 8 features from an SAE of width 64 requires evaluating $\binom{64}{8}=4{,}426{,}165{,}368$ candidate subsets. We therefore use heuristic selection. In the main results, we report top-$N$ selection, which is closest to common interpretability practice: features are ranked by their score for the target concept, and the top $N$ features are selected to form the unit. Unless stated otherwise, the x-axis reports the exact number $N$ of selected neurons; when reporting a best score, we optimize over feasible $N\le k$.

\subsection{Results}
The theory suggests that enlarging the candidate set---through a larger expansion factor, a higher $L_0$, or a larger selection budget---should improve the chance of finding a good concept learner. However, exact-$N$ performance need not be monotone. We therefore study how F1 score and disagreement ratio depend on SAE size, sparsity, and the number of selected neurons.

We present the results for ReLU SAEs and Top-$K$ SAEs in Fig.~\ref{fig:main_relu} and Fig.~\ref{fig:main_topk}, respectively.

\subsubsection{Concept Separation and Approximation}
To study the effect of the number of selected neurons, we fix expansion factor 8 for panels (a)--(h) in Fig.~\ref{fig:main_relu} and Fig.~\ref{fig:main_topk}. For each concept, we plot one curve per $L_0$. In the discussion below, we report the $L_0$ curve that achieves the best F1 score in the corresponding panel.

For a neuron-separable concept (panels (a),(b)), both ReLU and Top-$K$ SAEs can separate the concept with a single neuron, achieving perfect F1. For a non-neuron-separable but unit-separable concept (panels (c),(d)), multi-neuron units substantially improve performance. In the ReLU SAE at $L_0=5$, F1 increases from 0.4144 with one neuron to 0.8529 with a 4-neuron unit. In the Top-$K$ SAE at $L_0=6$, F1 increases from 0.4559 to 0.9646 with a 3-neuron unit. Although the learned units are not always perfect, the visualizations show that intersecting multiple neurons yields more exclusive activation regions and can represent more complex concepts than a single neuron.

Panels (e)--(h) study concept approximation with overlapping concepts and novel/unseen probe regions. Concept clusters are shown as circles. The blue region is the convex hull of all concepts after excluding seen concept regions; we use a soft boundary, treating points within the 95th-percentile distance to a concept cluster as seen. Panels (e),(f) show an easier-to-approximate concept that overlaps with only one other concept. In the ReLU SAE at $L_0=6$, F1 increases from 0.710 with one neuron to 0.9281 with a 4-neuron unit. In the Top-$K$ SAE at $L_0=4$, F1 increases from 0.7815 to 0.9089 with a 5-neuron unit. Panels (g),(h) show a harder-to-approximate concept that overlaps with three other concepts. In the ReLU SAE at $L_0=7$, F1 increases from 0.2829 with one neuron to 0.6261 with a 5-neuron unit; in the Top-$K$ SAE, the best multi-neuron unit reaches a substantially higher F1 than any single neuron, but still remains below the easier approximation case. These results show that a single neuron is generally insufficient for concept approximation, and that multi-neuron units improve approximation but do not eliminate the difficulty of heavy overlap.

The visualizations also clarify the difference between separation and approximation. Concept separation only needs to separate observed concepts, so its learned regions may remain unbounded. Concept approximation must also exclude novel/unseen regions, so the learned unit tends to shrink and bound the target concept more tightly. This is most visible when comparing panels (f) and (h): as the target concept overlaps more heavily with others, the unit becomes more closed around the target.

We also observe that F1 is not monotone in the exact number of selected neurons: it often first increases and then drops to zero when too many neurons are selected. We analyze this phenomenon in Section~\ref{sec:analysis_larger_unit}. We further analyze the difference between separation and approximation in Section~\ref{sec:analysis_topk_failure}.

\subsubsection{Concept Learning Capacity}
To study the effect of expansion factor on concept-learning capacity, we aggregate each concept by taking the number of selected neurons that achieves the best F1 score, and then plot one curve per $L_0$. We use this best F1 score as an operational measure of capacity: a higher-capacity SAE should separate or approximate concepts more accurately. Panels (i),(j) in Fig.~\ref{fig:main_relu} and Fig.~\ref{fig:main_topk} show the results.

For ReLU SAEs, increasing the expansion factor from 2 to 16 substantially improves F1 at fixed $L_0$, after which the gains plateau. This is expected: larger SAEs provide a larger pool of candidate neurons, but once useful feature combinations are available, additional width brings diminishing returns. The plateau is even clearer for Top-$K$ SAEs. Disjoint concepts are easier to learn than overlapping concepts, as reflected by their higher peak F1 scores. At fixed expansion factor, larger $L_0$ generally improves F1 because each concept can use a larger selection budget.

\subsubsection{Neuron Interpretation}
To study concept learning--neuron interpretation disagreement, we again choose, for each concept, the number of selected neurons that gives the best F1 score, and then compute the corresponding disagreement ratio. We plot one curve per $L_0$ in panels (k),(l) of Fig.~\ref{fig:main_relu} and Fig.~\ref{fig:main_topk}. For a target concept $A$, let
\begin{align*}
    \theta^* &= \operatorname*{arg\,max}_{\theta\in\Theta} s(A,\theta),\\
    B &= \operatorname*{arg\,max}_{C\in\mathcal{C}} s(C,\theta^*),\\
    \mathrm{Disagree} &= \mathbf{1}\{A\neq B\},
\end{align*}
where $s$ is the F1 score. The first two lines correspond to the forward concept-learning map $f$ and the reverse neuron-interpretation map $g$ in Section~\ref{mainframework:neuron_interp}. The disagreement ratio averages this indicator over concepts. When $A=B$, the target concept and its selected unit form a fixed point of the Galois connection, i.e., a formal concept in the concept lattice.

Panels (k),(l) study the disagreement ratio for the separation example in (d) and the approximation example in (h). Comparing (c) with (k), and (g) with (l), lower but nonzero F1 generally corresponds to higher disagreement, while high F1 gives low disagreement. This matches the role of F1 as a membership-exclusivity metric. Comparing (k) and (l), overlapping concepts produce more disagreement than disjoint concepts, because the reverse map can easily select a nearby overlapping concept. A zero F1 score can sometimes yield no disagreement when too many selected neurons make the intersection empty or nearly empty; under our convention, the reverse step then selects the target concept or no competing concept.

Overall, these results show that concept learning and neuron interpretation need not agree. We discuss this mismatch in more detail in Section~\ref{sec:analysis_disagreement_ratio}.

\section{Analysis}\label{sec:analysis}

\subsection{Difference Between Concept Separation and Approximation}\label{sec:analysis_topk_failure}

\begin{figure}[t]
\centering
\includegraphics[width=\columnwidth]{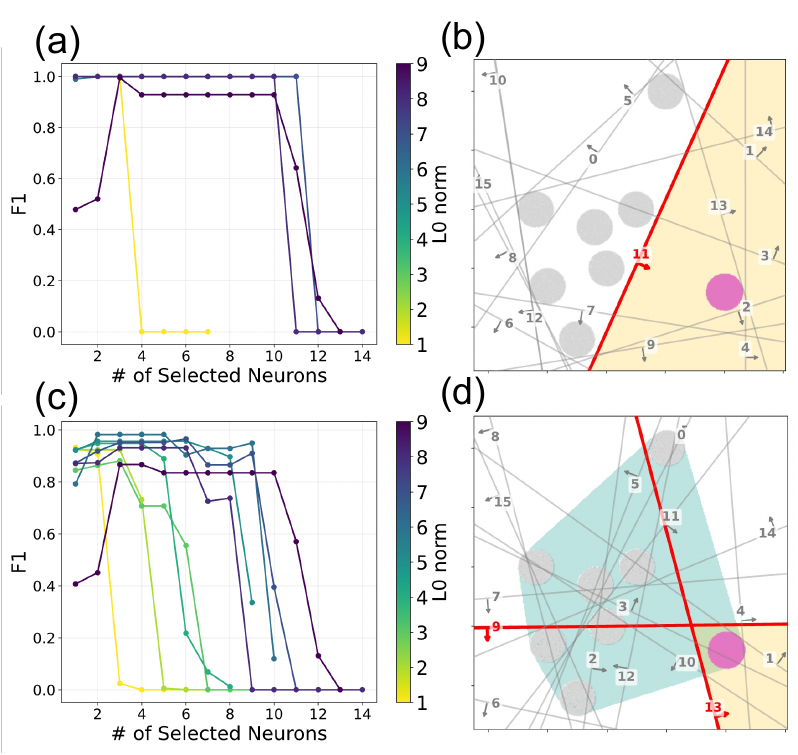}
\vspace{-15pt}
\caption{Difference between concept separation and concept approximation, with expansion factor 8. \textbf{(a)(b)}: F1 score and visualization for a neuron-separable concept under concept separation; one neuron at $L_0=1$ achieves F1$=1.0$. \textbf{(c)(d)}: F1 score and visualization for the same concept under concept approximation; the best case shown uses $L_0=6$ and achieves F1$=0.9281$.}
\vspace{-15pt}
\label{fig:sep_vs_approx}
\end{figure}

Although concept separation and concept approximation are mathematically related, they behave differently in practice. Fig.~\ref{fig:sep_vs_approx} shows a case study on the same neuron-separable concept. Under concept separation, one neuron perfectly separates the target concept (F1$=1.0$). Under concept approximation, perfect F1 is not achieved; the best unit shown reaches F1$=0.9281$.

The difference comes from what each task penalizes. Concept separation is evaluated only on observed data support, so the learner only needs to include the target concept and exclude other observed concepts. Concept approximation also evaluates novel/unseen regions, so the unit must bound the target concept more tightly and avoid over-generalizing into blank regions. This explains why approximation uses more neurons in Fig.~\ref{fig:sep_vs_approx}(d), and why its F1 curves resemble the harder multi-neuron cases in Fig.~\ref{fig:main_relu} and Fig.~\ref{fig:main_topk}. It also supports the observation from Section~\ref{sec:theory_concept_separation}: when concepts overlap, separation on the boundary begins to behave like approximation.

\subsection{More Selected Neurons Is Not Monotonically Better}\label{sec:analysis_larger_unit}
\begin{figure}[t]
\centering
\includegraphics[width=\columnwidth]{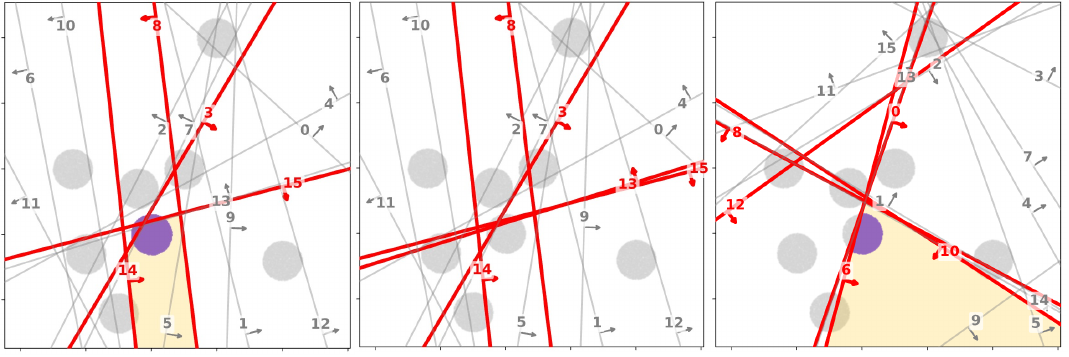}
\vspace{-15pt}
\caption{Visualizations of concept separation with different exact numbers of selected neurons. \textit{Left}: best unit for $L_0=5$ with $N=4$ selected neurons, achieving F1$=0.8529$. \textit{Middle}: the same SAE with $N=5$, where F1 drops to 0.0924. \textit{Right}: best unit with exact $N=5$, achieving F1$=0.6748$.}
\vspace{-15pt}
\label{fig:more_neuron}
\end{figure}

In Fig.~\ref{fig:main_relu} and Fig.~\ref{fig:main_topk}, increasing the exact number of selected neurons does not always increase F1. This does not contradict Corollary~\ref{corollary:unit_separation}: the theory says that a larger neuron budget gives more freedom, not that a concept must benefit from using more neurons.

Fig.~\ref{fig:more_neuron} illustrates the issue. The left panel shows the best unit for $L_0=5$ with $N=4$ selected neurons. If we keep the same SAE and force one more selected neuron, the activation region can shrink sharply and cut out most of the target concept, causing the F1 drop shown in the middle panel. Even after re-optimizing the selection for exact $N=5$ (right panel), the best F1 remains below the $N=4$ optimum. Thus, exact-$N$ curves can decrease because additional intersections may remove useful parts of the target concept. If the x-axis instead represented ``up to $N$'' selected neurons, the curve would be non-decreasing, since the learner could always reuse the best smaller unit.

\subsection{Illusions of Separation/Approximation Error \& Limitations of Top-$K$ Selection}\label{sec:analysis_sep_app_error_limitation_topk}
\begin{figure}[t]
\centering
\includegraphics[width=\columnwidth]{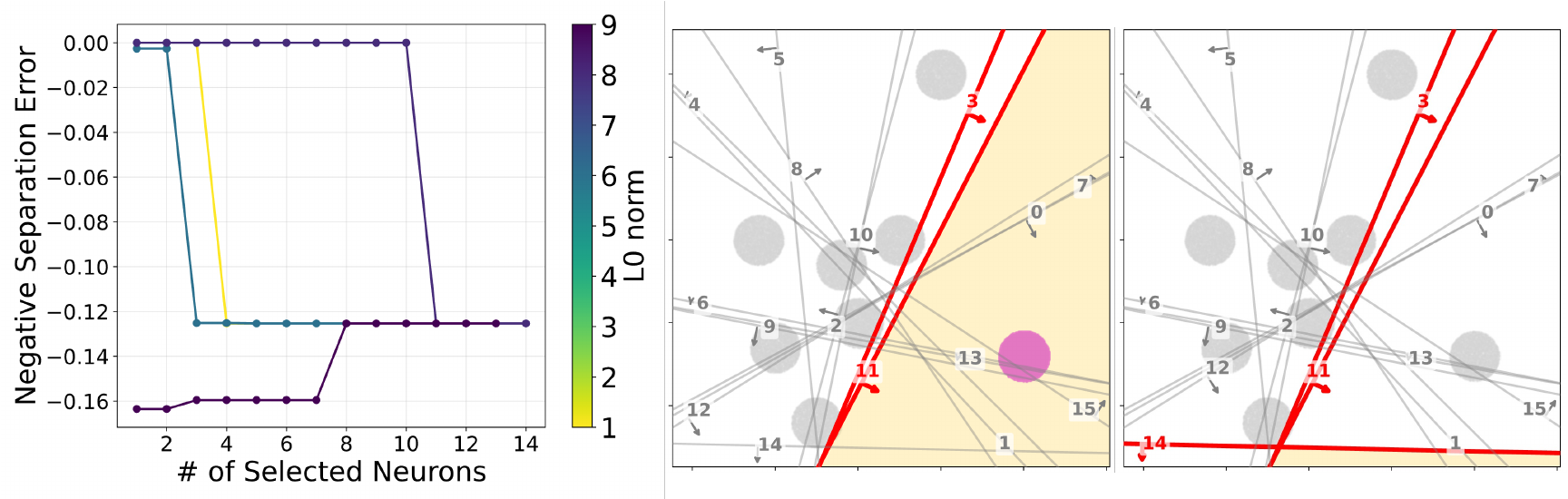}
\vspace{-15pt}
\caption{Negative separation error and visualizations when $-e_{\mathrm{sep}}$ is used as the neuron-selection objective, with expansion factor 8. \textit{Left}: $-e_{\mathrm{sep}}$ versus the number of selected neurons. \textit{Middle}: best 2-neuron unit for $L_0=6$, with $-e_{\mathrm{sep}}=-0.0026$. \textit{Right}: best 3-neuron unit for $L_0=6$, with $-e_{\mathrm{sep}}=-0.1253$.}
\vspace{-15pt}
\label{fig:sep_topk_failure}
\end{figure}

Although $e_{\mathrm{sep}}=0$ and $e_{\mathrm{app}}=0$ characterize perfect concept separation and approximation, these errors can be poor objectives for selecting neurons. Fig.~\ref{fig:sep_topk_failure} shows a representative failure case.

The left panel may appear benign because several curves remain close to zero. However, for $L_0=6$, increasing the number of selected neurons from 2 to 3 makes the activation region miss the target concept almost entirely (middle and right panels). The heuristic selects neuron 14 because it has small separation error: it misses target mass 0.12 but has zero contamination. For small target concepts, missing the target can therefore be penalized less than including unrelated concepts.

This is a class-imbalance effect, which has been studied intensively in anomaly detection~\cite{anomalydetection}. Each concept occupies only a small part of the total space, so treating concept learning as binary classification with an unnormalized measure can favor conservative solutions that reject the target concept rather than risk false positives. Measure-weighted errors also underweight small concepts: if $\mu(C_1)=0.2$ and $\mu(C_2)=0.01$, then completely missing $C_2$ incurs a much smaller penalty than missing $C_1$. This suggests that concept learning and neuron interpretation need metrics that normalize by concept mass, such as F1 score or Intersection-over-Union (IoU). This issue is related to the metric sanity checks studied in \citet{sanitycheck}. We omit separate IoU results because they are qualitatively similar to the F1 results reported above. Analytically, the corresponding IoU error has the same asymptotic approximation rate as the approximation error, up to an additional $1/\mu(C)$ factor; this factor gives relatively more weight to small-mass concepts.

Finally, Fig.~\ref{fig:sep_topk_failure} also shows a limitation of top-$N$ selection. For $N=3$, the better combination is neurons $(3,11,0)$, but neuron 0 has large individual contamination error and is not selected; the heuristic instead selects neuron 14, which looks good individually but destroys the intersection. Thus, a neuron can be poor alone but useful in combination. This score-wise greediness can break top-$N$ selection, and similar failures can occur in other greedy methods, including matching pursuit~\cite{matchingpursuit,matchingpursuitsae} and forward selection~\cite{forwardselection}.

\subsection{More About Disagreement Ratio}\label{sec:analysis_disagreement_ratio}
\begin{table*}[t]
\centering
\small
\begin{tabular}{lllrrrrrr}
\toprule
Concept Type & Concept Learning Type & SAE Architecture & EF=2 & EF=4 & EF=8 & EF=16 & EF=32 & EF=64 \\
\midrule
\multirow{4}{*}{Disjoint Concepts} & \multirow{2}{*}{Concept Separation} & ReLU & 0.44 & 0.26 & 0.17 & 0.04 & 0.02 & 0.01 \\
 &  & Top-K & 0.70 & 0.36 & 0.14 & 0.03 & 0.01 & 0.01 \\
 & \multirow{2}{*}{Concept Approximation} & ReLU & 0.51 & 0.33 & 0.24 & 0.12 & 0.06 & 0.05 \\
 &  & Top-K & 0.72 & 0.41 & 0.23 & 0.08 & 0.03 & 0.03 \\
\multirow{4}{*}{Overlapping Concepts} & \multirow{2}{*}{Concept Separation} & ReLU & 0.69 & 0.52 & 0.45 & 0.28 & 0.25 & 0.22 \\
 &  & Top-K & 0.82 & 0.66 & 0.41 & 0.31 & 0.30 & 0.31 \\
 & \multirow{2}{*}{Concept Approximation} & ReLU & 0.69 & 0.52 & 0.45 & 0.29 & 0.23 & 0.20 \\
 &  & Top-K & 0.82 & 0.66 & 0.42 & 0.31 & 0.28 & 0.30 \\
\midrule
All &  &  & 0.69 & 0.50 & 0.31 & 0.15 & 0.12 & 0.10 \\
\bottomrule
\end{tabular}
\caption{Disagreement ratio by concept type, concept-learning task, SAE architecture, and expansion factor (EF). Each cell reports the disagreement ratio after optimizing F1. For each concept, we choose the number of selected neurons that gives the best F1 score, average over $L_0$ and seeds, and then average over concepts.}
\vspace{-15pt}
\label{tab:disagree}
\end{table*}

\begin{table*}[t]
\centering
\small
\begin{tabular}{lllrrrrrrr}
\toprule
Concept Type & Concept Learning Type & SAE Architecture & =0 & (0,.2] & (.2,.4] & (.4,.6] & (.6,.8] & (.8,1) & =1  \\
\midrule
\multirow{4}{*}{Disjoint Concepts} & \multirow{2}{*}{Concept Separation} & ReLU & 0.07 & 0.74 & 0.86 & 0.46 & 0.22 & 0.00 & 0.00 \\
 &  & Top-K & -- & 1.00 & 0.79 & 0.49 & 0.15 & 0.00 & 0.00  \\
 & \multirow{2}{*}{Concept Approximation} & ReLU & 0.07 & 0.77 & 0.75 & 0.43 & 0.04 & 0.00 & -- \\
 &  & Top-K & -- & 0.81 & 0.75 & 0.42 & 0.03 & 0.00 & -- \\
\multirow{4}{*}{Overlapping Concepts} & \multirow{2}{*}{Concept Separation} & ReLU & 0.05 & 0.54 & 0.79 & 0.79 & 0.00 & 0.00 & -- \\
 &  & Top-K & -- & 0.89 & 0.82 & 0.73 & 0.00 & 0.00 & -- \\
 & \multirow{2}{*}{Concept Approximation} & ReLU & 0.05 & 0.61 & 0.83 & 0.66 & 0.00 & 0.00 & -- \\
 &  & Top-K & -- & 0.90 & 0.81 & 0.61 & 0.00 & 0.00 & -- \\
\midrule
All &  &  & 0.05 & 0.72 & 0.81 & 0.66 & 0.04 & 0.00 & 0.00 \\
\bottomrule
\end{tabular}
\caption{Disagreement ratio by F1 interval. Each cell reports the disagreement ratio for cases whose optimized F1 score falls in the corresponding interval. The optimized unit size is chosen per concept, and values are averaged over $L_0$, seeds, and concepts.}
\vspace{-20pt}
\label{tab:disagree_each_f1}
\end{table*}

Table~\ref{tab:disagree} gives a broader view of how disagreement depends on expansion factor, concept type, learning task, and SAE architecture. Four trends stand out. First, disjoint concepts yield lower disagreement than overlapping concepts. Second, concept separation usually yields lower disagreement than concept approximation. Third, on disjoint concepts, Top-$K$ SAEs are worse than ReLU SAEs at small expansion factors but catch up or improve at larger expansion factors; on overlapping concepts, Top-$K$ SAEs are generally less stable. Fourth, increasing the expansion factor reduces disagreement.

The first two trends follow from concept-learning quality. Disjoint concepts and separation tasks usually achieve higher F1 scores, meaning that the learned unit captures the target concept more exclusively; this lowers the chance that the reverse neuron-interpretation map selects another concept. Approximation is harder because unseen and unapproximatable regions create additional ambiguity. The fourth trend is also expected: a larger expansion factor gives a larger pool of candidate neurons and allows activation regions to surround the data support more flexibly.

The architectural trend is more subtle. For Top-$K$ SAEs, activation regions are subsets of the corresponding ReLU halfspaces because a neuron can have positive pre-activation but still fail to enter the top-$K$ set. This relative gating creates ``holes'' in activation regions. On disjoint concepts, such holes are less likely to intersect relevant data support, so the main effect of Top-$K$ gating is to suppress weak residual activations; at larger expansion factors this can produce cleaner features and lower disagreement. On overlapping concepts, however, the holes are more likely to fall inside data support, which destabilizes the reverse map and can increase disagreement. Thus, Top-$K$ competition gives cleaner sparsity but may reduce the stability of the concept lattice when concepts overlap.

Table~\ref{tab:disagree_each_f1} further shows that disagreement is concentrated at intermediate F1 scores. In this regime, a neuron or unit partially learns the target concept while also mixing in other concepts. High F1 scores, especially F1$=1$, almost never produce disagreement. This supports the use of F1 as a practical metric for concept learning because it directly rewards membership exclusivity.

Since real-world concepts are often overlapping rather than disjoint, future work should account for overlap when choosing the SAE architecture, concept-learning objective, and neuron-selection algorithm.

\section{Discussion}

In this work, we propose a mathematical framework for studying concept learning and neuron interpretation in sparse autoencoders. We formulate human and model concepts as sets, and view concept learning as a problem of set alignment. This perspective distinguishes three modes of alignment: concept detection, concept separation, and concept approximation, each capturing a different relationship between human concepts and SAE-induced model concepts.

Our analysis explains several phenomena in SAE-based interpretability. In particular, it shows why a single neuron is often insufficient for representing a human concept, why wider SAEs can improve concept learning by providing more candidate features, and why excessive sparsity may hurt when a concept requires multiple features. We also derive capacity and data geometry requirements for successful concept learning, linking concept geometry and SAE width.

Finally, we show that concept learning and neuron interpretation are not equivalent. A feature set may represent a human concept well in the forward direction, but need not be uniquely associated with that concept in the reverse direction. This discrepancy suggests the need for metrics and algorithms that capture bidirectional alignment and the set geometry of concepts.

This work has several limitations. First, our theoretical analysis focuses primarily on ReLU SAEs, while SAEs with gated activations or other architectural variants are not fully studied. Second, although we identify a discrepancy between concept learning and neuron interpretation, we do not yet provide a rigorous concept-lattice formulation that characterizes their bidirectional relationship, and our empirical study of this relationship remains preliminary. Future work can develop a more general formulation of both problems, study richer SAE architectures and data geometries, and design algorithms that reach agreement between concept learning and neuron interpretation.

\newpage
\bibliography{example_paper}
\bibliographystyle{icml2026}

\newpage
\section{Appendix}
\subsection{List of Contents}
We organize the appendix as follows.
\begin{itemize}
    \item \textbf{Experiment Details}~\ref{app:exp} describes the synthetic data generation, SAE training hyperparameters, hardware, and training time.
    \item \textbf{A Complete View of Concept Learning and Neuron Interpretation}~\ref{app:neuron_interp} includes proofs for Section~\ref{sec:neuron_interp}.
    \item \textbf{Proofs for Concept Separation}~\ref{app:concept_separation} contains proofs for Section~\ref{sec:theory_concept_separation}.
    \item \textbf{Proofs for Concept Approximation}~\ref{app:concept_approximation} contains proofs for Section~\ref{sec:theory_concept_approximation}.
    \item \textbf{Proofs for Concept Learning Capacity}~\ref{app:capacity} contains the proof for Section~\ref{sec:theory_capacity}.
    \item \textbf{Additional Discussion on Top-$K$ SAE}~\ref{app:topk} discusses the additional rank-interference effect induced by Top-$K$ gating.
\end{itemize}

\subsection{Experiment Details}\label{app:exp}
For each cluster, we sample 10,000 points with uniform density. The disjoint-concept dataset contains 8 clusters, and the overlapping-concept dataset contains 12 clusters. In the overlapping dataset, the density in an overlap region is kept the same as the density in a non-overlap region. This mimics realistic concept densities: for example, the density of ``red car'' should not be double-counted simply because a point belongs to both ``red'' and ``car.'' We approximate the data-supported measure $\mu$ by the empirical measure on observed data. For concept approximation, we additionally sample 10,000 probe points from the blank region inside the convex hull of all observed data, i.e., $\operatorname{Conv}(X)\setminus X$, and include these points when empirically estimating the ambient/probe measure $\nu$.

For ReLU SAEs, we use expansion factors in $\{1,2,4,8,16,32,64\}$ and $L_1$ regularization coefficients in
\begin{align*}
    \{&10^{-5},5\!\times\!10^{-5},10^{-4},5\!\times\!10^{-4},10^{-3},5\!\times\!10^{-3},10^{-2},\\
    &5\!\times\!10^{-2},10^{-1},5\!\times\!10^{-1},1,3,5,7,10\}.
\end{align*}

For Top-$K$ SAEs, $K$ must not exceed the SAE width. Since the input dimension is two, the SAE width is $2\mathrm{EF}$. We therefore use
\[
K\in\{1,\ldots,2\mathrm{EF}\}\quad \text{for }\mathrm{EF}\in\{1,2,4,8,16\},
\]
\[
K\in\{1,\ldots,35,40,45,50,55,60,64\}\quad \text{for }\mathrm{EF}=32,
\]
\begin{align*}
    K\in\{&1,\ldots,35,40,45,50,55,60,\\
    &64,80,100,128\}\quad \text{for }\mathrm{EF}=64.
\end{align*}

For both architectures, we use learning rates in $\{0.1,0.01,0.001,0.0003\}$ and train for 200 epochs, which is sufficient for convergence in this synthetic setting. Since our theory is constructive, we run 10 random seeds, $\{0,1,42,3407,8347,19285,657306,482910,915673,2746089\}$, and report the best concept-learning metric across seeds. Each run is trained on one NVIDIA A40 GPU and takes approximately 5--20 minutes, depending on the expansion factor.

\subsection{A Complete View of Concept Learning and Neuron Interpretation}\label{app:neuron_interp}
Although neuron interpretation has not been specified in the previous sections, the theoretical results above still provide insight into it. In particular, using individual neurons or multi-neuron units for concept learning leads to different levels of interpretation quality, such as different degrees of monosemanticity. We now state the link between concept learning and neuron interpretation in an algebraic way.

Define $U$ as the power set of all data, $U=\mathcal{P}(X)$, so that the human concept family $\mathcal{C}$ is a finite subset of $U$. Define $\mathcal{N}=[d]$ as the set of neurons. Recall that $M\in\mathcal{P}(\mathcal{N})$ is a set of neurons and $\theta_M\subseteq X$ is the activation region of $M$. For a single neuron $N\in\mathcal{N}$, we write $\theta_N:=\theta_{\{N\}}$. Since each $\theta_M$ is a subset of $X$, we also regard it as an element of $U$.

We define a binary relation $R\subseteq U\times\mathcal{N}$ by
\begin{align*}
    C R N \Longleftrightarrow C\subseteq \theta_N.
\end{align*}
Intuitively, $C R N$ means that neuron $N$ is active on the whole data region $C$. When $C\in\mathcal{C}$, this says that $N$ covers the whole human concept region $C$. This relation is not assumed to be a function: a concept may be represented by multiple neurons, and a neuron may be related to multiple concepts.

The relation $R$ induces two maps
\begin{align*}
    &f:U\rightarrow \mathcal{P}(\mathcal{N}),
    \qquad f(C)=\{N\in\mathcal{N}: C R N\},\\
    &g:\mathcal{P}(\mathcal{N})\rightarrow U,
    \qquad g(M)=\bigcap_{N\in M} \theta_N,
\end{align*}
with the convention that $g(\emptyset)=X$. The map $f$ sends a data region to the set of neurons that are active on the entire region. Thus, $f$ corresponds to the concept-to-neuron direction. The map $g$ sends a set of neurons to their common activation region. Thus, $g$ corresponds to the neuron-to-region direction used in neuron interpretation.

Both $U$ and $\mathcal{P}(\mathcal{N})$ are partially ordered by set inclusion. We write these posets as $(U,\subseteq)$ and $(\mathcal{P}(\mathcal{N}),\subseteq)$. This order-theoretic formulation is useful because it allows us to study hierarchical structure among concepts and neuron-defined regions. The larger the data region $C$ is, the fewer neurons can be active on all of it; the larger the neuron set $M$ is, the smaller its common activation region becomes. This is captured by the following Galois connection.

\begin{theorem}[Galois Connection Between Human Concepts and Model Neurons]\label{theorem:gc}
The maps $f$ and $g$ form a contravariant Galois connection between $U$ and $\mathcal{P}(\mathcal{N})$. That is, for every $C\in U$ and every $M\in\mathcal{P}(\mathcal{N})$,
\begin{align*}
    C\subseteq g(M)
    \Longleftrightarrow
    M\subseteq f(C).
\end{align*}
\end{theorem}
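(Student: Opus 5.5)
The plan is to verify the equivalence directly by unfolding the definitions of $f$ and $g$. Both sides turn out to be restatements of one elementwise condition: every neuron in $M$ is active on every point of $C$. No earlier result from the paper is needed, since the argument uses only the definition of the relation $R$ and elementary properties of intersection.

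First, I would rewrite the left side. Since $g(M)=\bigcap_{N\in M}\theta_N$, the inclusion $C\subseteq g(M)$ holds if and only if $C\subseteq\theta_N$ for every $N\in M$. This is the universal property of intersection: a set lies in an intersection exactly when it lies in each member. Next, I would rewrite the right side. By definition of $f$, $M\subseteq f(C)$ means every $N\in M$ satisfies $C R N$, and $C R N$ is defined as $C\subseteq\theta_N$. So both sides reduce to the same statement,
\begin{align*}
\forall N\in M:\; C\subseteq\theta_N,
\end{align*}
which yields the equivalence.

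The only point needing care is the degenerate case $M=\varnothing$, where the convention $g(\varnothing)=X$ is used. In that case the right side $\varnothing\subseteq f(C)$ always holds. The left side $C\subseteq X$ also holds, because $C\in U=\mathcal{P}(X)$. The equivalence is therefore consistent with the convention; if $g(\varnothing)$ were taken as a literal empty intersection, it would need an ambient universe, and that universe is exactly $X$.

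Finally, I would note that the contravariant (antitone) character follows from the displayed equivalence by the standard argument. Applying the equivalence with $M=f(C)$ gives $C\subseteq gf(C)$, and applying it with $C=g(M)$ gives $M\subseteq fg(M)$. Monotonicity reversal can also be checked directly: if $C\subseteq C'$ then $f(C')\subseteq f(C)$, and if $M\subseteq M'$ then $g(M')\subseteq g(M)$. No step is a genuine obstacle here, and the only place requiring attention is the empty-set convention.
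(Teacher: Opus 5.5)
Your proposal is correct and is the same argument as the paper's proof: both unfold $g(M)=\bigcap_{N\in M}\theta_N$ and the definition of $f$ via $R$, so that each side becomes the condition $C\subseteq\theta_N$ for every $N\in M$. You also check the case $M=\varnothing$ explicitly (using $g(\varnothing)=X$ and $C\in\mathcal{P}(X)$), which the paper's proof leaves implicit.
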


\begin{proof}
Fix $C\in U$ and $M\in\mathcal{P}(\mathcal{N})$. By the definition of $g$,
\begin{align*}
    C\subseteq g(M)
    \Longleftrightarrow
    C\subseteq \bigcap_{N\in M}\theta_N.
\end{align*}
This holds if and only if $C\subseteq \theta_N$ for every $N\in M$. By the definition of $R$, this is equivalent to saying that $C R N$ for every $N\in M$. By the definition of $f(C)$, this holds if and only if every $N\in M$ belongs to $f(C)$, or equivalently $M\subseteq f(C)$. Therefore, $C\subseteq g(M)\Longleftrightarrow M\subseteq f(C)$.
\end{proof}

Intuitively, the theorem says that a data region $C$ lies inside the common activation region of a neuron set $M$ if and only if every neuron in $M$ is active on the whole region $C$. When $C\in\mathcal{C}$, we interpret $C$ as a human concept. The word ``contravariant'' reflects the reversal of inclusion: adding more neurons to $M$ shrinks $g(M)$, while enlarging $C$ can only remove neurons from $f(C)$. Therefore, the concept-learning direction $f$ and the neuron-interpretation direction $g$ are two sides of the same order-theoretic structure.

This Galois connection naturally induces closure operators $g\circ f$ on $U$ and $f\circ g$ on $\mathcal{P}(\mathcal{N})$. For a data region $C\in U$, the region $g(f(C))$ is the smallest neuron-closed region, relative to the current neuron family, that contains $C$. If $C=g(f(C))$, then $C$ is closed under the concept-to-neuron-to-concept operation and is called a \textit{fixed point}. Similarly, a neuron set $M$ is a fixed point if $M=f(g(M))$. Studying fixed points is useful for two reasons. First, different regions or neuron sets with the same closure collapse to the same canonical object, which makes redundancy visible. Second, fixed points carry lattice operations, giving a clean way to compare broader and narrower model-learned regions.

From fixed points, we define \textit{formal concepts} as follows: a formal concept is a pair $(C,M)$ satisfying $f(C)=M$ and $g(M)=C$. Equip formal concepts with the order
\begin{align*}
    (C_1,M_1)\le (C_2,M_2)
    \Longleftrightarrow
    C_1\subseteq C_2.
\end{align*}
Equivalently, the neuron-side order is reversed: $(C_1,M_1)\le (C_2,M_2)$ if and only if $M_2\subseteq M_1$. The meet and join operations are
\begin{align*}
    &(C_1,M_1)\land (C_2,M_2)\\
    &\qquad = (C_1\cap C_2, f(C_1\cap C_2)),\\
    &(C_1,M_1)\vee (C_2,M_2)\\
    &\qquad = (g(M_1\cap M_2), M_1\cap M_2).
\end{align*}
Intuitively, the meet gives a more specific region, $C_1\cap C_2$, and then collects the neurons active on all of that smaller region. The join keeps the neurons shared by the two descriptions, $M_1\cap M_2$, and returns the broader closed region described by those shared neurons. The collection of formal concepts with this order is the \textit{concept lattice}. The $U$-side components $C$ are extents, and the $\mathcal{P}(\mathcal{N})$-side components $M$ are intents. Extents that are not in $\mathcal{C}$ are not necessarily meaningless; rather, they are closed regions that are not named by the chosen human concept family, and may therefore appear uninterpretable under that vocabulary.

This algebraic framework is an abstraction of the geometric framework. It gives a more general view of SAE phenomena and helps explain why concept learning can be cast as a set alignment problem between human concepts and model-learned regions.

\textbf{Polysemanticity/monosemanticity.}
Neuron $N$ is polysemantic when its activation region covers two unrelated concepts: there exist disjoint $C_1,C_2\in\mathcal{C}$ with $C_1\subseteq g(\{N\})$ and $C_2\subseteq g(\{N\})$, i.e., the region $\theta_N=g(\{N\})$ is too coarse. If $\{N\}$ is already closed, the corresponding lattice node has extent $g(\{N\})$ and intent $\{N\}$; in general, its intent is $f(g(\{N\}))$. Its extent contains $C_1\cup C_2$, so the current dictionary does not distinguish the two concepts inside this neuron region. Lattice operations inside this fixed context can only recombine regions already induced by the available neurons; they cannot create a new boundary inside $\theta_N$ unless some neuron already encodes that boundary. Conversely, $N$ is monosemantic in the ideal case when $g(\{N\})=C$ for a single $C\in\mathcal{C}$, up to the chosen notion of approximation. Disentangling a polysemantic neuron therefore requires changing the dictionary or training objective, not merely reordering one fixed lattice. This is what feature splitting attempts to do.

\textbf{Feature splitting.}
Enlarging the SAE replaces the current neuron family $\Theta$ by a richer family $\Theta'$. The training objective, through reconstruction under sparsity, may learn new neurons $N_1,N_2,\dots$ with $\theta_{N_i}\approx C_i$ that did not exist in $\Theta$. Once these neurons are present, a coarse region $\theta_N$ that previously covered several concepts can be resolved into purer regions. It is important to keep two directions separate. The actual split from coarse to fine is performed by optimization in the larger dictionary; it is what carves $C_1$ from $C_2$. After training, the refined lattice records the result by giving the $C_i$ their own nodes. If the old coarse feature remains, or if the refined dictionary contains shared neurons describing it, the broad region can appear above the finer ones. Otherwise, the relation $\theta_N\approx\bigcup_i\theta_{N_i}$ should be read as an approximate geometric relation, not necessarily as an exact join inside a single intersection-based lattice.

\textbf{Concept Hierarchy.}
This is natural in the poset. If $C_{\text{large}}$ is more general than $C_{\text{small}}$, then $C_{\text{small}}\subseteq C_{\text{large}}$, and by antitonicity the neuron sets are reverse-ordered:
\begin{align*}
    f(C_{\text{large}})\subseteq f(C_{\text{small}}).
\end{align*}
Thus, moving to a more specific concept can only add neuron constraints, while moving to a more general concept removes them. However, under a sparsity constraint, a standard SAE tends to learn a relatively flat set of features and does not explicitly represent these subconcept/superconcept containments. Capturing such hierarchy may require lattice- or graph-structured SAEs.

\textbf{Feature family.}
A feature family can be viewed as a local part of the concept lattice around a parent concept: the nodes lying below a parent extent, or a smaller interval selected by co-activating neurons. This captures related features without forcing each member of the family to be represented by a single neuron.

\textbf{Feature absorption.}
In the exact lattice, the hierarchy $C_{\text{small}}\subseteq C_{\text{large}}$ implies
\begin{align*}
    f(C_{\text{large}})\subseteq f(C_{\text{small}}).
\end{align*}
That is, any neuron $N$ that is active on the whole parent concept $C_{\text{large}}$ must also be active on the child concept $C_{\text{small}}$. This gives the ideal implication
\begin{align*}
    &x\in C_{\text{small}}
    \Longrightarrow
    N\text{ activates on }x,\\
    &\text{for }N\in f(C_{\text{large}}).
\end{align*}
Feature absorption is an empirical failure of this ideal implication: a detector intended for $C_{\text{large}}$ fails to activate on a non-negligible part of $C_{\text{small}}$. In the empirical relation, this means that $N\notin f(C_{\text{small}})$ even though the concept hierarchy predicts that it should be. This discrepancy suggests a simple diagnostic: list the implications $C_{\text{small}}\Rightarrow N$ predicted by the known concept hierarchy, and flag the cases where the empirical activation data do not support the implication.

\subsection{Proofs for Concept Separation}\label{app:concept_separation}
Throughout this subsection, $C\in\mathcal C$ is a nonempty compact concept and
\[
    N:=X\setminus C=\bigcup_{C'\in\mathcal C,\,C'\neq C}(C'\setminus C)
\]
is the non-target region. We assume $|\mathcal{C}|$ is finite and $X\subset\mathbb R^n$ is bounded. All closures are taken in the ambient input space, and $\operatorname{Conv}(\cdot)$ denotes convex hull.

\subsubsection{Proof of Theorem~\ref{theorem:neuron_separation_condition}}\label{app:neuron_separation_condition}
We first recall two standard convexity facts.
\begin{lemma}[Caratheodory convexity theorem~\cite{infiniteanalysis}]\label{lemma:caratheodory}
In an $n$-dimensional vector space, every vector in the convex hull of a nonempty set can be written as a convex combination of at most $n+1$ points from that set.
\end{lemma}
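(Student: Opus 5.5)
The plan is the standard reduction argument. First I recall that $\operatorname{Conv}(S)$, for a nonempty $S$ in an $n$-dimensional vector space $V$, equals the set of all \emph{finite} convex combinations of points of $S$. This is the usual characterization: the set of finite convex combinations is convex, contains $S$, and is contained in every convex superset of $S$ by induction on the number of terms. So fix $x\in\operatorname{Conv}(S)$ and write $x=\sum_{i=1}^m\lambda_i p_i$ with $p_i\in S$, $\lambda_i\ge 0$, $\sum_i\lambda_i=1$. Among all such representations, choose one with $m$ minimal. Discarding zero coefficients, I may assume every $\lambda_i>0$. The goal is to show $m\le n+1$.

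Suppose for contradiction that $m\ge n+2$. Then the $m-1\ge n+1$ vectors $p_2-p_1,\dots,p_m-p_1$ in $V$ are linearly dependent, so there are scalars $\mu_2,\dots,\mu_m$, not all zero, with $\sum_{i\ge 2}\mu_i(p_i-p_1)=0$. Setting $\mu_1:=-\sum_{i\ge2}\mu_i$ gives a nontrivial affine dependence
\[
\sum_{i=1}^m\mu_i p_i=0,\qquad \sum_{i=1}^m\mu_i=0 .
\]
Since the $\mu_i$ are not all zero and sum to zero, at least one $\mu_i$ is strictly positive.

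The key step is choosing the right shift. Let $t:=\min\{\lambda_i/\mu_i:\ \mu_i>0\}>0$, and define $\lambda_i':=\lambda_i-t\mu_i$. Then $\sum_i\lambda_i' p_i=x-t\cdot 0=x$ and $\sum_i\lambda_i'=1-t\cdot0=1$. Each $\lambda_i'\ge0$: if $\mu_i\le0$ the coefficient does not decrease, and if $\mu_i>0$ then $t\le\lambda_i/\mu_i$. Moreover, for an index attaining the minimum, $\lambda_i'=0$. Dropping that index yields a convex representation of $x$ using $m-1$ points of $S$, which contradicts minimality. Hence $m\le n+1$.

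The only delicate point is this choice of $t$. It must simultaneously keep all coefficients nonnegative and kill at least one of them, and the minimum over the positive $\mu_i$ does exactly that. Everything else is linear algebra together with the finite-combination description of the convex hull, which I will state briefly rather than belabor.
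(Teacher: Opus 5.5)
Your proof is correct: it is the standard minimal-representation argument, in which an affine dependence among $n+2$ or more points lets you shift the weights by $t=\min\{\lambda_i/\mu_i:\mu_i>0\}$ and eliminate one point. The paper gives no proof of its own for this lemma and simply cites Aliprantis--Border, which presents essentially this same reduction argument.
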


\begin{corollary}~\cite{infiniteanalysis}\label{corollary:convex_hull_compact}
The convex hull of a compact subset of a finite-dimensional vector space is compact.
\end{corollary}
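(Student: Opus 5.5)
The plan is to realize $\operatorname{Conv}(K)$, for a compact $K\subseteq\mathbb{R}^n$, as the continuous image of a compact set, using Lemma~\ref{lemma:caratheodory}. If $K=\varnothing$ the statement is trivial, so assume $K\neq\varnothing$.

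Let $\Delta_n=\{\lambda\in\mathbb{R}^{n+1}:\lambda_i\ge 0,\ \sum_{i=1}^{n+1}\lambda_i=1\}$ be the standard simplex. It is closed and bounded, hence compact by Heine--Borel. The product $\Delta_n\times K^{n+1}$ is then compact, by Tychonoff for finite products or directly by Heine--Borel in $\mathbb{R}^{(n+1)+n(n+1)}$. Define
\begin{align*}
    \Phi:\Delta_n\times K^{n+1}\to\mathbb{R}^n,\qquad \Phi(\lambda,x_1,\dots,x_{n+1})=\sum_{i=1}^{n+1}\lambda_i x_i .
\end{align*}
This map is polynomial in its coordinates, hence continuous.

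The key step is to show $\Phi(\Delta_n\times K^{n+1})=\operatorname{Conv}(K)$.

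The inclusion $\subseteq$ is immediate, since every value of $\Phi$ is a convex combination of points of $K$.

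For $\supseteq$, take $y\in\operatorname{Conv}(K)$. By Lemma~\ref{lemma:caratheodory}, $y=\sum_{i=1}^{m}\lambda_i x_i$ with $m\le n+1$, $x_i\in K$, and $\lambda$ a probability vector.
\begin{itemize}
    \item If $m<n+1$, pad the combination by repeating a point of $K$ (say $x_1$) with zero weights. This is possible because $K\neq\varnothing$.
    \item The padded tuple lies in $\Delta_n\times K^{n+1}$ and maps to $y$ under $\Phi$.
\end{itemize}

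Since continuous images of compact sets are compact, $\operatorname{Conv}(K)$ is compact.

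The only point I expect to require care is the surjectivity step. It is exactly where finite dimensionality enters, through Carathéodory's uniform bound $n+1$ on the number of points. Without that bound, $\operatorname{Conv}(K)$ would be a union over all $m$ of images of $\Delta_{m-1}\times K^m$, with no single compact domain covering it. This is also why the corollary can fail in infinite dimensions: there one only obtains compactness of the closed convex hull, under completeness. The zero-weight padding convention handles the bookkeeping for combinations with fewer than $n+1$ points.
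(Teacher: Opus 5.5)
Your proof is correct and follows the standard route. The paper gives no proof of its own: it states this corollary immediately after Carath\'eodory's theorem (Lemma~\ref{lemma:caratheodory}) and cites it from \cite{infiniteanalysis}, where the argument is exactly yours, exhibiting $\operatorname{Conv}(K)$ as the image of the compact set $\Delta_n\times K^{n+1}$ under the continuous map $(\lambda,x_1,\dots,x_{n+1})\mapsto\sum_i\lambda_i x_i$. The zero-weight padding is handled correctly, and your implicit identification of an abstract $n$-dimensional space with $\mathbb{R}^n$ is harmless because the paper works in $\mathbb{R}^n$ throughout.
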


\paragraph{Theorem~\ref{theorem:neuron_separation_condition}.}
A concept $C$ can be separated from $N$ with one neuron if and only if
\[
    \operatorname{Conv}(C)\cap \overline{\operatorname{Conv}(N)}=\phi.
\]

\begin{proof}
If $N=\phi$, the condition is immediate and a sufficiently large bias makes any nonzero neuron positive on the bounded set $C$. We therefore assume $N\neq\phi$.

By Corollary~\ref{corollary:convex_hull_compact}, $K:=\operatorname{Conv}(C)$ is compact and convex. Since $N\subset X$ and $X$ is bounded, $\operatorname{Conv}(N)$ is bounded; hence $L:=\overline{\operatorname{Conv}(N)}$ is compact by the Heine--Borel theorem~\cite{topology}. The set $L$ is also convex because the closure of a convex set is convex.

Recall that separation by one neuron means that there are $w\in\mathbb R^n\setminus\{0\}$ and $b\in\mathbb R$ such that
\begin{equation}\label{eq:neuron_sep}
    w^\top x+b>0\quad (x\in C),
    \qquad
    w^\top y+b\le 0\quad (y\in N).
\end{equation}
Equivalently, with $\beta:=-b$, we require $w^\top x>\beta$ on $C$ and $w^\top y\le\beta$ on $N$. The usual hyperplane-separation theorem almost gives this result, but here the positive side is strict while the negative side is weak. We therefore give the direct proof.

\noindent\textbf{($\Leftarrow$)} Assume $K\cap L=\phi$. Since $K$ and $L$ are nonempty compact sets, the continuous map $(u,v)\mapsto\|u-v\|$ attains its minimum over $K\times L$ at some $(u^\ast,v^\ast)$. Let
\[
    \delta:=\|u^\ast-v^\ast\|,
    \qquad
    w:=u^\ast-v^\ast.
\]
Because $K\cap L=\phi$, we have $\delta>0$ and $w\neq0$.

For any $u\in K$ and $t\in[0,1]$, convexity gives $u^\ast+t(u-u^\ast)\in K$. By minimality,
\[
    \|u^\ast+t(u-u^\ast)-v^\ast\|^2\ge \|u^\ast-v^\ast\|^2.
\]
Expanding, dividing by $t>0$, and sending $t\to0^+$ gives
\begin{equation}\label{eq:var_C}
    w^\top u\ge w^\top u^\ast\qquad (u\in K).
\end{equation}
Similarly, varying $v\in L$ while keeping $u^\ast$ fixed yields
\begin{equation}\label{eq:var_N}
    w^\top v\le w^\top v^\ast\qquad (v\in L).
\end{equation}
Moreover,
\begin{equation}\label{eq:gap}
    w^\top u^\ast-w^\top v^\ast=w^\top(u^\ast-v^\ast)=\|w\|^2=\delta^2>0.
\end{equation}
Set $\beta:=\frac12(w^\top u^\ast+w^\top v^\ast)$ and $b:=-\beta$. For $x\in C\subseteq K$, equations~\eqref{eq:var_C} and~\eqref{eq:gap} imply
\[
    w^\top x+b\ge w^\top u^\ast-\beta=\tfrac12\delta^2>0.
\]
For $y\in N\subseteq \operatorname{Conv}(N)\subseteq L$, equations~\eqref{eq:var_N} and~\eqref{eq:gap} imply
\[
    w^\top y+b\le w^\top v^\ast-\beta=-\tfrac12\delta^2<0.
\]
Thus $(w,b)$ satisfies~\eqref{eq:neuron_sep}.

\noindent\textbf{($\Rightarrow$)} Conversely, suppose $(w,b)$ satisfies~\eqref{eq:neuron_sep}, and set $\beta:=-b$. Then $w^\top x>\beta$ for all $x\in C$ and $w^\top y\le\beta$ for all $y\in N$.

By Lemma~\ref{lemma:caratheodory}, each $u\in\operatorname{Conv}(C)$ is a finite convex combination $u=\sum_j\lambda_j x_j$ with $x_j\in C$, $\lambda_j\ge0$, and $\sum_j\lambda_j=1$. Therefore
\[
    w^\top u=\sum_j\lambda_j w^\top x_j\ge \min_j w^\top x_j>\beta,
\]
where the final strict inequality is valid because the minimum is over finitely many terms. Hence
\begin{equation}\label{eq:posC}
    w^\top u>\beta\qquad (u\in\operatorname{Conv}(C)).
\end{equation}
The same finite-convex-combination argument gives $w^\top v\le\beta$ for all $v\in\operatorname{Conv}(N)$. By continuity, this extends to
\begin{equation}\label{eq:negN}
    w^\top v\le\beta\qquad (v\in\overline{\operatorname{Conv}(N)}).
\end{equation}
If a point $p$ belonged to both $\operatorname{Conv}(C)$ and $\overline{\operatorname{Conv}(N)}$, equations~\eqref{eq:posC} and~\eqref{eq:negN} would give the contradiction $w^\top p>\beta$ and $w^\top p\le\beta$. Thus the two sets are disjoint.
\end{proof}

In infinite-dimensional settings, such as an RKHS induced by an RBF kernel, the convex hull of a compact concept need not be compact without additional assumptions. In that case, the natural replacement is the closed convex hull, denoted $\overline{\operatorname{Conv}(C)}$. In complete metrizable locally convex spaces, the closed convex hull of a compact set is compact~\cite{infiniteanalysis}. Since the closure of a convex set is convex, this closed convex hull is exactly the closure of the ordinary convex hull.

\subsubsection{Proof of Corollary~\ref{corollary:neuron_separation}}\label{app:neuron_separation}
\paragraph{Corollary~\ref{corollary:neuron_separation}.}
\begin{enumerate}
    \item All concepts in $\mathcal C$ can be separated from each other by one neuron per concept if and only if
    \[
        \operatorname{Conv}(C_i)\cap\overline{\operatorname{Conv}(N_i)}=\phi
        \qquad (C_i\in\mathcal C),
    \]
    where $N_i=X\setminus C_i$.
    \item If perfect one-neuron separation is possible for all distinct, nonempty concepts in a finite $\mathcal C$, then at least $|\mathcal C|$ neurons are necessary and $|\mathcal C|$ neurons are sufficient.
\end{enumerate}

\begin{proof}
The first claim follows by applying Theorem~\ref{theorem:neuron_separation_condition} to each concept $C_i$ with $N_i=X\setminus C_i$.

For the second claim, sufficiency follows from the first claim: each concept can be assigned one separating neuron. For necessity, suppose a neuron with activation region $H^+$ perfectly represents two concepts $C_i$ and $C_j$. Perfect separation gives $H^+\cap X=C_i$ and also $H^+\cap X=C_j$, so $C_i=C_j$. Thus distinct concepts cannot share the same one-neuron representation, and at least $|\mathcal C|$ neurons are required.
\end{proof}

\subsubsection{Proof of Theorem~\ref{theorem:unit_separation_condition}}\label{app:unit_separation_condition}
We first record a strict point-versus-convex-set separation lemma.
\begin{lemma}[Strict separation of a point]\label{lemma:point_separation}
Let $K\subseteq\mathbb R^n$ be nonempty, compact, and convex, and let $y\notin K$. Then there exist $w\neq0$ and $\beta\in\mathbb R$ such that
\[
    w^\top y<\beta,
    \qquad
    w^\top u>\beta\quad (u\in K).
\]
\end{lemma}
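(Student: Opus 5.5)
The plan is to reuse the nearest-point argument from the proof of Theorem~\ref{theorem:neuron_separation_condition}, with the set $L$ replaced by the single point $\{y\}$. Since $K$ is nonempty and compact and $u\mapsto\|u-y\|$ is continuous, there is a minimizer $u^\ast\in K$ of the distance to $y$. Because $y\notin K$, we have $\delta:=\|u^\ast-y\|>0$. Set $w:=u^\ast-y\neq0$.

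\textbf{Step 1: variational inequality.} For any $u\in K$ and $t\in[0,1]$, convexity gives $u^\ast+t(u-u^\ast)\in K$. Minimality then gives
\[
\|u^\ast+t(u-u^\ast)-y\|^2\ge\|u^\ast-y\|^2.
\]
Expand the left side, divide by $t>0$, and let $t\to0^+$. This yields $w^\top(u-u^\ast)\ge0$, i.e.\ $w^\top u\ge w^\top u^\ast$ for all $u\in K$. This is exactly the computation behind \eqref{eq:var_C}.

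\textbf{Step 2: the gap.} Since $w^\top u^\ast-w^\top y=\|w\|^2=\delta^2>0$, we can take the midpoint threshold
\[
\beta:=\tfrac12\bigl(w^\top u^\ast+w^\top y\bigr).
\]
Then $w^\top y=\beta-\tfrac12\delta^2<\beta$. For every $u\in K$, Step 1 gives $w^\top u\ge w^\top u^\ast=\beta+\tfrac12\delta^2>\beta$, which proves the claim.

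\textbf{Main obstacle.} The only nontrivial point is the existence of the nearest point, which uses compactness of $K$. Uniqueness of the nearest point is not needed. The one-sided limit in Step 1 is routine.

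Equivalently, one could apply the already-proved $(\Leftarrow)$ direction of Theorem~\ref{theorem:neuron_separation_condition}'s argument with $L=\{y\}$, which is compact and convex. Its construction gives a strict margin of $\tfrac12\delta^2$ on both sides, so the strict inequality at $y$ comes for free.
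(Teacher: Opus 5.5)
Your proposal is correct and is essentially the paper's proof: the same nearest point $u^\ast$, normal $w=u^\ast-y$, and variational inequality $w^\top u\ge w^\top u^\ast=w^\top y+\|w\|^2$. Your midpoint threshold $\beta=\tfrac12(w^\top u^\ast+w^\top y)$ equals the paper's choice $\beta=w^\top y+\tfrac12\|w\|^2$.
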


\begin{proof}
Since $K$ is compact, the map $u\mapsto\|u-y\|$ attains its minimum at some $u^\ast\in K$. Because $y\notin K$, $\delta:=\|u^\ast-y\|>0$. Let $w:=u^\ast-y$. For $u\in K$ and $t\in[0,1]$, convexity gives $u^\ast+t(u-u^\ast)\in K$, and minimality gives
\[
    \|u^\ast+t(u-u^\ast)-y\|^2\ge \|u^\ast-y\|^2.
\]
Expanding and letting $t\to0^+$ yields $w^\top(u-u^\ast)\ge0$, so
\[
    w^\top u\ge w^\top u^\ast=w^\top y+\|w\|^2\qquad (u\in K).
\]
Choosing $\beta:=w^\top y+\frac12\|w\|^2$ proves the claim.
\end{proof}

\paragraph{Theorem~\ref{theorem:unit_separation_condition}.}
A concept $C$ can be separated from $N$ by a unit if and only if
\[
    \operatorname{Conv}(C)\cap\overline N=\phi.
\]

\begin{proof}
A unit is a finite intersection of neuron activation regions. Given neurons $(w_i,b_i)_{i=1}^m$, write
\[
    \theta:=\bigcap_{i=1}^m\{x: w_i^\top x+b_i>0\}.
\]
Separating $C$ from $N$ by this unit means
\begin{equation}\label{eq:unit_sep}
    C\subseteq\theta,
    \qquad
    N\cap\theta=\phi,
\end{equation}
or equivalently, every point in $C$ activates all selected neurons, while every point in $N$ fails to activate at least one selected neuron.

\noindent\textbf{($\Leftarrow$)} Assume $\operatorname{Conv}(C)\cap\overline N=\phi$. Let $K:=\operatorname{Conv}(C)$, which is nonempty, compact, and convex by Corollary~\ref{corollary:convex_hull_compact}. The set $\overline N$ is compact because $N\subseteq X$ and $X$ is bounded.

For each $y\in\overline N$, Lemma~\ref{lemma:point_separation} gives $w_y\neq0$ and $\beta_y$ such that
\begin{equation}\label{eq:unit_point_strict}
    w_y^\top y<\beta_y,
    \qquad
    w_y^\top u>\beta_y\quad (u\in K).
\end{equation}
The open halfspaces $U_y:=\{x:w_y^\top x<\beta_y\}$ form an open cover of $\overline N$. By compactness, choose a finite subcover $U_{y_1},\ldots,U_{y_m}$. Define neurons by $(w_i,b_i):=(w_{y_i},-\beta_{y_i})$ and set
\[
    \theta=\bigcap_{i=1}^m\{x:w_i^\top x+b_i>0\}.
\]
For every $x\in C\subseteq K$, the second inequality in~\eqref{eq:unit_point_strict} gives $w_i^\top x+b_i>0$ for all $i$, so $C\subseteq\theta$. For every $y\in N\subseteq\overline N$, the finite subcover gives some $i$ with $y\in U_{y_i}$, hence $w_i^\top y+b_i<0$ and $y\notin\theta$. Therefore $N\cap\theta=\phi$.

\noindent\textbf{($\Rightarrow$)} Conversely, suppose a finite unit $\theta$ satisfies~\eqref{eq:unit_sep}. We first show that $\operatorname{Conv}(C)\subseteq\theta$. For any $u\in\operatorname{Conv}(C)$, write $u=\sum_j\lambda_jx_j$ with $x_j\in C$, $\lambda_j\ge0$, and $\sum_j\lambda_j=1$. For each selected neuron $i$,
\[
    w_i^\top u+b_i
    =\sum_j\lambda_j(w_i^\top x_j+b_i)>0,
\]
because each term is positive. Hence $u\in\theta$.

If $p\in\operatorname{Conv}(C)\cap\overline N$, then $p\in\theta$. Since $\theta$ is open and $p\in\overline N$, there exists a sequence $y_k\in N$ with $y_k\to p$ and eventually $y_k\in\theta$, contradicting $N\cap\theta=\phi$. Thus $\operatorname{Conv}(C)\cap\overline N=\phi$.
\end{proof}

\subsubsection{Proof of Corollary~\ref{corollary:unit_separation}}\label{app:unit_separation}
\paragraph{Corollary~\ref{corollary:unit_separation}.}
Assume $\mathcal C$ is finite. All concepts in $\mathcal C$ can be separated from each other by units if and only if
\[
    \operatorname{Conv}(C_i)\cap\overline{(C_j\setminus C_i)}=\phi
    \qquad (C_i,C_j\in\mathcal C,
    \ C_j\neq C_i).
\]

\begin{proof}
Fix $C_i\in\mathcal C$ and write
\[
    N_i:=X\setminus C_i=\bigcup_{j\neq i}(C_j\setminus C_i).
\]
By Theorem~\ref{theorem:unit_separation_condition}, $C_i$ can be separated from $N_i$ by a unit if and only if
\[
    \operatorname{Conv}(C_i)\cap\overline{N_i}=\phi.
\]
Since $\mathcal C$ is finite,
\[
    \overline{N_i}
    =\overline{\bigcup_{j\neq i}(C_j\setminus C_i)}
    =\bigcup_{j\neq i}\overline{(C_j\setminus C_i)}.
\]
Therefore
\begin{align*}
    &\operatorname{Conv}(C_i)\cap\overline{N_i}=\phi\\
    &\Longleftrightarrow\\
    &\operatorname{Conv}(C_i)\cap\overline{(C_j\setminus C_i)}=\phi
    \quad\text{for all }j\neq i.
\end{align*}

Taking the conjunction over all $i$ proves the equivalence.
\end{proof}

\paragraph{Neuron-budget remark.}
The equivalence above guarantees that each concept can be separated by some finite unit, but the number of halfspaces obtained from the compact-cover proof depends on the geometry of the sets. A geometry-independent bound of $|\mathcal C|(|\mathcal C|-1)$ neurons follows under the stronger pairwise one-neuron condition
\[
    \operatorname{Conv}(C_i)\cap
    \overline{\operatorname{Conv}(C_j\setminus C_i)}=\phi
    \qquad (i\neq j).
\]
Indeed, Theorem~\ref{theorem:neuron_separation_condition} then gives one neuron that is positive on $C_i$ and non-positive on $C_j\setminus C_i$ for each ordered pair $(i,j)$. Intersecting the $|\mathcal C|-1$ neurons associated with a fixed $i$ separates $C_i$ from $X\setminus C_i$. Across all concepts this uses at most $|\mathcal C|(|\mathcal C|-1)$ neurons before any reuse. Under the weaker condition of Corollary~\ref{corollary:unit_separation}, the correct general statement is finiteness, not a universal $|\mathcal C|(|\mathcal C|-1)$ bound.

This explains why units can be more monosemantic than individual neurons: neurons can be reused across several units, and each unit can combine several halfspaces to exclude different non-target regions.

\subsubsection{Discussion of Definition~\ref{def:separation_error}}\label{app:separation_error}
Recall that the separation error is the symmetric difference on the data support:
\[
    e_{\mathrm{sep}}(C,\theta)=\mu(C\Delta\theta)
    =\underbrace{\mu(\theta\setminus C)}_{\text{contamination error }e_c}
    +\underbrace{\mu(C\setminus\theta)}_{\text{missing error }e_m},
\]
where $\mu$ is a Borel probability measure supported on $X$.

An equivalent form is
\[
    e_{\mathrm{sep}}(C,\theta)
    =\mu(C\setminus\theta)
    +\mu\!\left(\theta\cap\bigcup_{C'\in\mathcal C,\,C'\neq C}(C'\setminus C)\right).
\]
The second term measures how much unrelated concept mass is covered by $\theta$.

\begin{proof}
Because $X=\bigcup_{C'\in\mathcal C}C'$, we have
\[
    \bigcup_{C'\in\mathcal C,\,C'\neq C}(C'\setminus C)=X\setminus C.
\]
For concept separation, $\theta\subseteq X$, and hence
\[
    \theta\cap\bigcup_{C'\in\mathcal C,\,C'\neq C}(C'\setminus C)
    =\theta\cap(X\setminus C)=\theta\setminus C.
\]
Finally,
\[
    C\Delta\theta=(C\setminus\theta)\cup(\theta\setminus C),
\]
where the union is disjoint. Additivity of $\mu$ gives the claimed expression.
\end{proof}

\subsection{Proofs for Concept Approximation}\label{app:concept_approximation}
For the approximation results, we work on a bounded convex probe domain $\Omega\subset\mathbb R^n$ containing the support on which the approximation error is evaluated. We assume that $\nu$ has a density with respect to the relevant Lebesgue measure on $\Omega$, bounded above and below on its support. These regularity assumptions rule out purely atomic or otherwise degenerate probe measures, for which arbitrary labels on finitely many atoms may not reflect the ambient geometry.

\subsubsection{Proof of Theorem~\ref{theorem:approximation_condition}}\label{app:approximation_condition}
\paragraph{Theorem~\ref{theorem:approximation_condition}.}
A concept $C\in\mathcal C$ can be arbitrarily well approximated under the approximation error by units if and only if $C$ is convex up to a $\nu$-null set.

\begin{proof}
Here ``convex up to a $\nu$-null set'' means that there exists a convex set $K$ such that $\nu(C\Delta K)=0$.

\noindent\textbf{($\Leftarrow$)} Suppose such a convex set $K$ exists. Standard convex-body approximation gives a sequence of polytopes $P_m$, each representable as an intersection of finitely many halfspaces, such that $\nu(K\Delta P_m)\to0$. Each $P_m$ can be implemented by a unit, up to boundary sets of $\nu$-measure zero. Therefore
\[
    \nu(C\Delta P_m)\le \nu(C\Delta K)+\nu(K\Delta P_m)\to0,
\]
so $C$ can be arbitrarily well approximated by units.

\noindent\textbf{($\Rightarrow$)} Conversely, suppose there exist units $\theta_m$ such that $\nu(C\Delta\theta_m)\to0$. Each $\theta_m$ is convex because it is an intersection of halfspaces. Intersecting with the bounded domain $Supp(\nu)$ and taking closures changes the sets only on polyhedral boundaries, which are $\nu$-null; thus, the compact convex sets $K_m:=\overline{\theta_m\cap Supp(\nu)}$ 
still satisfy $\nu(C\Delta K_m)\to0$. By the compactness theorem for convex bodies on a bounded domain, a subsequence of $K_m$ converges in Hausdorff distance to a compact convex set $K\subseteq Supp(\nu)$. Since Hausdorff convergence of convex bodies implies convergence in measure, $\nu(K_m\Delta K)\to0$ along this subsequence. Therefore
\[
    \nu(C\Delta K)\le \nu(C\Delta K_m)+\nu(K_m\Delta K)\to0,
\]
so $\nu(C\Delta K)=0$. Thus $C$ is convex up to a $\nu$-null set.
\end{proof}

\subsubsection{Proof of Theorem~\ref{theorem:approximation_rate}}\label{app:approximation_rate}
\paragraph{Theorem~\ref{theorem:approximation_rate}.}
Under regularity and boundary-smoothness conditions, a concept $C\in\mathcal C$ can be approximated by a unit $\theta_M$ with error
\[
    e_{\mathrm{app}}(C,\theta_M)\lesssim e_{\mathrm{irr}}+A|M|^{-\frac{2}{r-1}},
\]
where $A$ depends on boundary smoothness, $r$ is the effective dimension of $C$, and $e_{\mathrm{irr}}$ is an irreducible convexification error. In particular, $e_{\mathrm{irr}}=0$ when $C$ is convex, or more generally when $\nu(\operatorname{Conv}(C)\setminus C)=0$.

\begin{proof}
We first consider the convex case. Let $m:=|M|$. Assume that $C$ is an $r$-dimensional convex body, $r\ge2$, with $\mathcal C^2$ boundary and positive Gaussian curvature, and that $\nu$ has positive continuous density $\rho_\nu$ with respect to $r$-dimensional Lebesgue measure on the affine hull of $C$. The case $r=1$ is simpler: an interval can be represented exactly by two halfspaces, up to boundary measure zero.

A unit with $m$ selected neurons is an intersection of $m$ halfspaces, hence a polytope with at most $m$ facets. Conversely, any such polytope can be represented by a unit with at most $m$ neurons. Theorem 3 of \citet{convexbodyapprox2} gives the asymptotic best approximation rate by $m$-facet polytopes:
\begin{align*}
    &\inf_{\theta_M:\,|M|=m}\nu(C\Delta\theta_M)
\sim \\
    &\frac12\operatorname{ldiv}_{r-1}
    \left(
    \int_{\partial C}
    \rho_\nu(x)^{\frac{r-1}{r+1}}
    \kappa_C(x)^{\frac{1}{r+1}}
    \,d\mathcal H^{r-1}(x)
    \right)^{\frac{r+1}{r-1}}
     m^{-\frac{2}{r-1}}.
\end{align*}
Absorbing the boundary integral and dimension-dependent constants into $A$ yields
\[
    e_{\mathrm{app}}(C,\theta_M)\lesssim A m^{-\frac{2}{r-1}}.
\]

For a non-convex concept, let $K$ be a convex body used as a convex envelope for $C$; when $\operatorname{Conv}(C)$ satisfies the same regularity assumptions, we may take $K=\operatorname{Conv}(C)$. Let $\theta_M$ be the $m$-facet unit approximating $K$. By the triangle inequality for symmetric difference,
\[
    \nu(C\Delta\theta_M)
    \le \nu(C\Delta K)+\nu(K\Delta\theta_M)
    \lesssim \nu(C\Delta K)+A_K m^{-\frac{2}{r-1}}.
\]
The first term is the irreducible error caused by approximating a non-convex set with convex units. Taking $K=\operatorname{Conv}(C)$ gives $e_{\mathrm{irr}}=\nu(\operatorname{Conv}(C)\setminus C)$ whenever $C\subseteq\operatorname{Conv}(C)$ is evaluated under $\nu$; more generally, $e_{\mathrm{irr}}$ may be defined as the best such convex-envelope error over admissible convex $K$. This proves the claimed upper bound.
\end{proof}

\subsection{Proofs for Concept Learning Capacity}\label{app:capacity}
\paragraph{Theorem~\ref{theorem:capacity}.}
Suppose perfect monosemanticity holds for all concepts, and each concept is represented by at most $k_c$ neurons. In the regime $d\gg k_c$, this requires approximately
\[
    d\gtrsim (k_c!\,|\mathcal C|)^{1/k_c},
\]
where $d$ is the number of non-dead neurons.

\begin{proof}
If each concept is represented by at most $k_c$ neurons, then the number of possible nonempty neuron sets is
\[
    \sum_{r=1}^{k_c}{d\choose r}.
\]
Perfect monosemanticity requires the concept-to-unit map $f:\mathcal C\to\Theta$ to be injective, so two distinct concepts cannot be assigned to the same neuron set. Therefore
\[
    |\mathcal C|\le \sum_{r=1}^{k_c}{d\choose r}.
\]
When $d\gg k_c$, the leading term is
\[
    {d\choose k_c}=\frac{d^{k_c}}{k_c!}\,(1+o(1)).
\]
Thus a necessary asymptotic scaling is
\[
    \frac{d^{k_c}}{k_c!}\gtrsim |\mathcal C|,
\]
which gives
\[
    d\gtrsim (k_c!\,|\mathcal C|)^{1/k_c}.
\]
\end{proof}

\subsection{Additional Discussion on Top-$K$ SAE}\label{app:topk}
For Top-$K$ SAE, write $z_i(x)=\langle w_i,x\rangle+b_i$. The SNTA of neuron $i$ is
\[
    N_i^{\mathrm{topk}}
    =
    \{x\in X:z_i(x)>\tau_i\ \text{and}\ i\in\mathrm{TopK}_k(z(x))\}.
\]
Thus $N_i^{\mathrm{topk}}\subseteq H_i^+$. For a set of neurons $M$,
\[
    \theta_M^{\mathrm{topk}}=\bigcap_{j\in M}N_j^{\mathrm{topk}}.
\]
If $|M|>k$, then $\theta_M^{\mathrm{topk}}=\phi$. If $|M|\le k$, then $\theta_M^{\mathrm{topk}}$ equals the absolute-gating region $\bigcap_{j\in M}H_j^+$ after removing points where at least one neuron in $M$ fails to enter the top-$k$ set. This is the relative-gating effect.

Let
\[
    P_M:=\bigcap_{j\in M}H_j^+,
    \qquad
    Q_M:=\theta_M^{\mathrm{topk}}.
\]
Then $Q_M\subseteq P_M$. Define the rank-interference region
\[
    I_M:=P_M\setminus Q_M.
\]
For any concept $C$ and any measure $\eta$ equal to either $\mu$ or $\nu$,
\begin{align*}
    \eta(C\Delta Q_M)
    &=\eta(C\Delta P_M)
      +\eta(C\cap I_M)
      -\eta((P_M\setminus C)\cap I_M).
\end{align*}
Indeed, $Q_M=P_M\setminus I_M$, so replacing $P_M$ by $Q_M$ adds false negatives on $C\cap I_M$ and removes false positives on $(P_M\setminus C)\cap I_M$.

Therefore, Top-$K$ gating can hurt by removing true target points, but it can also help by removing false positives outside the target. The absolute-gating results transfer cleanly to Top-$K$ only when the rank-interference term is small on the target concept, or when its false-positive reduction outweighs the additional false negatives.

\end{document}